\documentclass[11pt]{article}

\usepackage[margin=1in]{geometry}
\usepackage{amsmath,amssymb,amsthm,amsfonts,mathtools,bm}
\usepackage{graphicx}
\usepackage{booktabs}
\usepackage{hyperref}
\usepackage{enumitem}
\usepackage{microtype}
\usepackage{mathrsfs}
\usepackage{bbm}
\usepackage{float}
\usepackage{placeins}

\hypersetup{colorlinks=true, linkcolor=blue, citecolor=blue, urlcolor=blue}

\newtheorem{theorem}{Theorem}
\newtheorem{proposition}{Proposition}

\newtheorem{corollary}{Corollary}

\newtheorem{assumption}{Assumption}

\title{Compressed-Sensing-Guided, Inference-Aware Structured Reduction for Large Language Models}
\author{
Andrew J.\ Kiruluta\\
{\small UC Berkeley School of Information}}
\date{}

\begin{document}
\maketitle

\begin{abstract}
Large language models achieve strong generative performance at the cost of extreme parameter counts, large memory footprints, and substantial decoding latency. Existing compression methods have shown that sparse or structured reductions can preserve accuracy under aggressive pruning, and prompt-compression methods have further demonstrated that substantial latency reductions can be obtained by removing redundant input content before inference. However, these two lines of work remain largely disjoint. Most model-compression approaches are static, are optimized offline, and do not explicitly exploit the fact that different prompts and even different decoding steps activate different latent computational pathways. Conversely, prompt-compression methods reduce sequence length but do not directly adapt the executed model subnetwork. In this manuscript we propose a unified compressed-sensing-guided framework in which random measurement operators are used to probe latent model usage, sparse recovery is used to estimate task-conditioned and token-adaptive support sets, and the recovered supports are compiled into hardware-efficient sparse execution paths over blocks, attention heads, channels, and feed-forward substructures.

The proposed formulation contributes five coupled novelties. First, it introduces task-conditioned measurements, so that different prompts induce different sparse supports and therefore different computational graphs. Second, it enables token-adaptive recovery, so that active substructures are re-estimated during decoding rather than fixed once offline. Third, it provides a formal sample-complexity analysis showing how many probe measurements are required to recover the active support under restricted isometry or mutual incoherence assumptions. Fourth, it imposes compile-to-hardware constraints so that recovered supports are restricted to structures compatible with efficient GPU kernels and practical runtime speedup. Fifth, it unifies prompt compression and model reduction in a single compressed-sensing objective, thereby coupling input-token selection and subnetwork selection rather than optimizing them independently. The resulting framework defines dynamic large language model execution as a measurement-and-recovery problem, with explicit approximation guarantees and deployment-oriented constraints.
\end{abstract}

\section{Introduction}

The modern large language model is simultaneously a triumph of scale and a systems bottleneck. Increasing model size has improved reasoning, coding, and instruction-following performance, yet the associated computational costs have made deployment increasingly expensive in memory, bandwidth, energy, and end-to-end latency. As a consequence, a major body of work has investigated pruning, structured compression, sparse execution, and prompt compression as complementary routes toward more efficient inference. One-shot pruning methods such as SparseGPT have shown that large autoregressive transformers can be pruned at substantial sparsity without retraining, and activation-aware methods such as Wanda have shown that weight importance can be estimated effectively from weight--activation interactions. In parallel, inference-aware structured pruning methods such as ZipLM have highlighted the practical distinction between removing parameters and obtaining actual speedups on target hardware. Prompt-compression methods such as LLMLingua and LongLLMLingua have additionally demonstrated that many prompt tokens are redundant for downstream generation and can be removed to reduce cost while preserving utility. These advances establish that both model structure and input context contain substantial redundancy, but they stop short of treating inference itself as a dynamic sparse recovery problem in which the active computational pathway depends on the prompt and evolves across generated tokens.

This observation motivates a different perspective. Rather than beginning from the premise that the full dense network must always be executed and then asking which components may be removed statically, we begin from the premise that, for a particular prompt and a particular decoding step, only a sparse subset of latent computational structures is truly necessary. The challenge is then not merely to prune weights, but to \emph{identify} the active support of blocks, heads, channels, or neurons from a small number of cheap measurements, and to do so quickly enough that the measurement process itself does not dominate inference. This viewpoint is naturally aligned with compressed sensing. In compressed sensing, one does not directly observe a high-dimensional signal in full; instead, one acquires a limited number of linear or approximately linear measurements and then recovers the sparse latent representation under structural assumptions. Translating this idea to large language models suggests a dynamic architecture in which random feature sketches, calibration probes, or low-dimensional activation measurements are used to infer which structured subcomponents should execute for the current task or token.

The central thesis of this manuscript is that compressed sensing provides not only a metaphor but also a mathematical and algorithmic framework for dynamic large language model execution. The proposed method uses randomized measurements of prompt-conditioned and token-conditioned features, recovers sparse support sets over structured computational units, and compiles those support sets into hardware-efficient sparse kernels. In contrast to purely static pruning, the support is allowed to vary across prompts. In contrast to conventional routing heuristics, the selection of active substructures is formulated as a sparse inverse problem with explicit assumptions and guarantees. In contrast to model-only compression, the framework also incorporates prompt compression in the same recovery objective, thereby jointly optimizing what information enters the model and which part of the model processes it.

The main novelty of the framework lies in the interaction among five components. First, the measurement operators are task-conditioned, capturing the empirical fact that different prompts induce different internal usage patterns. Second, the sparse support can be recovered online during decoding, yielding token-adaptive execution rather than a single static reduced model. Third, the number of measurements required for reliable support recovery can be bounded using standard compressed-sensing tools such as restricted isometry properties and mutual incoherence conditions. Fourth, the admissible supports are restricted to structured sparsity patterns that map to efficient kernels on GPUs or similar accelerators. Fifth, prompt compression and model subnetwork recovery are solved jointly, so the system learns when it is preferable to shorten the sequence, when it is preferable to reduce the executed model, and when both forms of compression should occur together.

\section{Relationship to Prior Work and Positioning}

Large language model pruning is already a mature and rapidly developing area, and any new framework must therefore be positioned carefully. SparseGPT demonstrated that one-shot pruning of very large autoregressive transformers can be posed as a sparse regression problem and solved layerwise with high empirical fidelity. Wanda showed that activation information can be incorporated in a particularly simple form by weighting magnitudes with input activations, leading to strong pruning performance without retraining. ZipLM then sharpened the systems perspective by emphasizing that unstructured sparsity often fails to translate into real speedups and by introducing inference-aware structured pruning targeted to specific runtime environments. These works establish that sparsity is both algorithmically meaningful and practically valuable, but their primary logic is still offline model reduction: they estimate importance, remove parameters, and then deploy a reduced but largely static network.

Prompt compression follows a parallel but distinct trajectory. LLMLingua and its extensions show that prompts often contain substantial redundancy and that removing unimportant tokens can reduce cost while preserving semantic integrity and task performance. Yet prompt compression does not typically induce a corresponding dynamic reduction in the executed subnetwork. The model remains dense, and the reduced input merely changes its length. From a systems perspective, prompt compression and model compression therefore attack different terms in the total inference cost, but they are rarely optimized in a coupled fashion.

The proposed work differs from these lines in a principled way. It does not merely prune weights, compress prompts, or heuristically route tokens. Instead, it formulates prompt-conditioned and token-conditioned execution as structured support recovery from random measurements. The emphasis is not simply that sparsity exists, but that the sparse support can be inferred from a small sketch of model-state information and that this support can be constrained to structures that yield actual hardware acceleration. The novelty is therefore not the generic claim that language models are compressible, which is already well established, but the specific synthesis of compressed-sensing measurement design, dynamic support recovery, structured compile-time constraints, and joint prompt--model compression.

\section{Problem Formulation}

Consider an autoregressive transformer with $L$ layers. At decoding step $t$, let $x_{1:t}$ denote the visible token sequence, consisting of the compressed or uncompressed prompt together with generated tokens up to time $t$. Let $h_t^{(\ell)} \in \mathbb{R}^{d_\ell}$ denote the hidden state entering layer $\ell$ at step $t$. We assume that each layer is decomposed into a collection of structured computational units. These units may represent entire transformer blocks, attention heads, channel groups, feed-forward subblocks, or hardware-aligned tiles. Let the set of all such units be indexed by $\mathcal{G} = \{1,\dots,G\}$. For each layer and step, only a subset of these units is expected to be materially active for the current task.

We model this by introducing a binary or relaxed support vector
\[
s_t \in \{0,1\}^{G}
\quad \text{or} \quad
s_t \in [0,1]^G,
\]
where $s_{t,g}=1$ indicates that structured unit $g$ is executed at step $t$. The effective model at time $t$ is therefore a subnetwork selected by $s_t$. If $\mathcal{F}(x_{1:t};\theta)$ denotes the full dense model with parameters $\theta$, then the dynamically reduced model is written
\[
\mathcal{F}(x_{1:t};\theta,s_t),
\]
where masked or compiled operators enforce execution only on the support indicated by $s_t$.

The key difficulty is that $s_t$ is not known a priori. Computing it by exhaustively evaluating all possible subnetworks would defeat the purpose of compression. We therefore introduce a \emph{measurement operator} that observes a low-dimensional sketch of the prompt-conditioned and token-conditioned latent state. Let
\[
z_t = A_t \,\phi(x_{1:t}, h_t^{(1)}, \dots, h_t^{(L)}) + \varepsilon_t,
\]
where $\phi$ is a feature map constructed from lightweight probes, cached statistics, or partial forward information, $A_t \in \mathbb{R}^{m_t \times D}$ is a random or pseudo-random measurement matrix, and $\varepsilon_t$ represents modeling and measurement noise. The objective is then to recover $s_t$ from $z_t$ under a sparsity prior.

To connect this to compressed sensing, we introduce a structured dictionary $\Psi \in \mathbb{R}^{D \times G}$ such that the latent feature map can be approximated by
\[
\phi(x_{1:t}, h_t^{(1)}, \dots, h_t^{(L)}) \approx \Psi \alpha_t,
\]
where $\alpha_t \in \mathbb{R}^G$ is sparse and its support corresponds to the active substructures. The measurement equation becomes
\[
z_t = A_t \Psi \alpha_t + \varepsilon_t.
\]
Recovery of the active support may then be posed as
\[
\hat{\alpha}_t
=
\arg\min_{\alpha \in \mathbb{R}^G}
\frac{1}{2}\|z_t - A_t \Psi \alpha\|_2^2
+
\lambda \|\alpha\|_{1,\mathcal{S}}
+
\Omega_{\mathrm{hw}}(\alpha)
+
\Omega_{\mathrm{temp}}(\alpha,\hat{\alpha}_{t-1}),
\]
where $\|\cdot\|_{1,\mathcal{S}}$ is a structured sparsity norm, $\Omega_{\mathrm{hw}}$ enforces compile-to-hardware constraints, and $\Omega_{\mathrm{temp}}$ regularizes temporal variation across decoding steps. The recovered support is then
\[
\hat{s}_t = \mathrm{supp}_{\tau}(\hat{\alpha}_t),
\]
where $\mathrm{supp}_{\tau}$ thresholds or projects $\hat{\alpha}_t$ into an admissible structured support.

\section{Task-Conditioned Measurements}

A central empirical premise of the proposed framework is that different prompts activate different latent computational pathways. A mathematical model that assumes a single universal support for all prompts is therefore unnecessarily restrictive and likely suboptimal. To reflect this, we introduce task-conditioned measurements in which the measurement design and the recovered support are functions of the prompt distribution.

Let $p$ denote a prompt, or more generally a task-context pair. We write the latent support as $s^\star(p,t)$ to emphasize that the active subnetwork depends both on the prompt and on the decoding step. The measurement operator may also depend on the prompt through a lightweight controller:
\[
A_t(p) = \Gamma(\eta(p,t)),
\]
where $\eta$ is a compact prompt encoder and $\Gamma$ maps the resulting summary into a measurement ensemble. This does not require expensive optimization over $A_t$; in practice, one may choose from a finite bank of random sketches or structured projections according to coarse task attributes inferred from the prompt. The purpose of task-conditioning is not to overfit measurements to individual sequences, but to exploit the fact that prompts belonging to different semantic or computational regimes often exhibit distinct support patterns.

The theoretical advantage of task-conditioned measurements is that they improve the alignment between the measurement operator and the sparse structure to be recovered. If the family of supports is heterogeneous across tasks, then a universal measurement design must succeed on the union of many support classes, increasing sample complexity. By contrast, if the prompt narrows the feasible support family from a large ambient class $\mathcal{C}$ to a smaller class $\mathcal{C}(p)$, then one expects improved recovery at fixed measurement budget. This observation will later be reflected in a prompt-conditional sample complexity bound in which the effective complexity depends on the localized support family rather than the global model.

Operationally, task-conditioned measurements also create a mechanism for specialization without duplicating the model. Instead of maintaining many task-specific dense checkpoints, the framework retains a common parameter pool but dynamically recovers different sparse execution paths. A coding-oriented prompt, for example, may activate different head patterns and feed-forward channels than a summarization or mathematical reasoning prompt. The resulting model behavior is therefore both compressed and specialized, yet it remains derived from a single shared backbone.

\section{Token-Adaptive Recovery and Dynamic Execution}

Static compression assumes that a single reduced subnetwork suffices for an entire prompt-response episode. While such a reduction can be useful, it ignores the fact that autoregressive decoding is inherently nonstationary. Early decoding steps may rely strongly on prompt-parsing heads and retrieval-oriented channels, while later steps may shift emphasis toward syntactic closure, numerical consistency, or long-range dependency maintenance. A fixed support therefore risks either over-provisioning computation or under-provisioning it at critical moments.

For this reason we propose token-adaptive recovery. At each decoding step $t$, the model forms a low-cost sketch $z_t$ of the current latent state and solves a structured sparse recovery problem to estimate $\hat{s}_t$. The reduced computation for the next step then uses only those units in $\hat{s}_t$, together with minimal auxiliary routing overhead. The support may be updated every step, every few steps, or event-triggered when uncertainty exceeds a threshold.

This dynamic scheme can be understood as a control loop embedded within autoregressive inference. The plant is the large language model, the latent state is its hidden representation, the sensor is the measurement operator, the estimator is the sparse recovery solver, and the actuator is the compiled sparse execution kernel. The recovery problem is regularized temporally through a term such as
\[
\Omega_{\mathrm{temp}}(\alpha_t,\alpha_{t-1})
=
\gamma \|\alpha_t - \alpha_{t-1}\|_2^2
\]
or a support-change penalty. This discourages unnecessary oscillation in the executed subnetwork while still permitting adaptation when the token-level computational regime genuinely changes.

The token-adaptive formulation offers two advantages. First, it provides additional sparsity by exploiting temporal locality: if only a few units change from one decoding step to the next, then the recovery problem becomes easier and the active support can be updated incrementally. Second, it preserves quality more effectively than overly aggressive static pruning, because the model is allowed to recruit additional computation on difficult tokens while remaining sparse on easy ones. In this sense, dynamic support recovery is a form of conditional computation grounded in a compressed-sensing inverse problem rather than in a purely learned gating heuristic.

\section{Joint Prompt and Model Compression}

Sequence length and model size contribute differently to inference cost, but they interact multiplicatively in the total number of operations. It is therefore suboptimal to compress them independently whenever a joint formulation is available. To address this, we introduce a coupled decision variable consisting of a prompt-retention vector $r \in \{0,1\}^n$ over the input tokens and a model-support vector $s_t$ over structured computational units. The retained prompt is
\[
\tilde{x}_{1:n} = \Pi(r) x_{1:n},
\]
where $\Pi(r)$ denotes the token-selection operator. The dynamic inference problem then becomes
\[
\min_{r,\{\alpha_t\}}
\mathcal{L}_{\mathrm{task}}\big(\tilde{x}_{1:n}, \{\alpha_t\}\big)
+ \lambda_p \|r\|_0
+ \lambda_m \sum_{t=1}^T \|\alpha_t\|_{1,\mathcal{S}}
+ \Omega_{\mathrm{hw}}(\{\alpha_t\})
+ \Omega_{\mathrm{faith}}(r),
\]
subject to the measurement equations
\[
z_t = A_t(r)\Psi(r)\alpha_t + \varepsilon_t.
\]

This objective reveals an important conceptual shift. Prompt compression is no longer merely a preprocessing step. Instead, removing a token changes the latent measurements and therefore changes the recovered active support. Conversely, choosing a more expressive support may reduce the need to retain certain prompt tokens. The framework thus allocates a finite inference budget across two complementary resources: tokens and subnetworks. A longer prompt may be processed by a smaller active model if the retained tokens are highly informative; alternatively, a more aggressive prompt reduction may be compensated by a richer executed subnetwork.

The faithfulness term $\Omega_{\mathrm{faith}}(r)$ constrains prompt compression so that semantic or task-critical information is preserved. This term may instantiate KL divergence to a teacher distribution, answer consistency penalties, or token-importance constraints derived from a prompt compressor. The novelty is not to replace prompt compression methods such as LLMLingua, but to incorporate their logic into a joint compressed-sensing recovery problem in which prompt retention influences the measurement geometry and ultimately the selected execution path.

\section{Hardware-Aware Structured Recovery}

Model sparsity is only valuable in deployment if it aligns with the capabilities of the target hardware. Unstructured sparsity often reduces parameter count without delivering proportional latency gains because the resulting memory accesses and control flow remain inefficient. For this reason the proposed framework restricts support recovery to hardware-admissible structures. These may include block sparsity in matrix multiplication tiles, head sparsity aligned with attention-kernel batching, channel sparsity aligned with tensor-core dimensions, or semi-structured patterns such as $N\!:\!M$ masks.

Formally, let $\mathcal{H}$ denote the family of supports that can be compiled into efficient kernels on a target accelerator. Then the recovered support must satisfy
\[
\hat{s}_t \in \mathcal{H}.
\]
Equivalently, the coefficient vector $\alpha_t$ is projected onto a structured feasible set:
\[
\hat{\alpha}_t
=
\arg\min_{\alpha}
\frac{1}{2}\|z_t - A_t \Psi \alpha\|_2^2 + \lambda \|\alpha\|_{1,\mathcal{S}}
\quad \text{subject to } \mathrm{supp}(\alpha)\in\mathcal{H}.
\]
This turns the sparse recovery problem into a constrained structured estimation problem. Although this may appear more restrictive than classical compressed sensing, it is essential for practical deployment because it avoids supports that are mathematically sparse but operationally useless.

The compile-to-hardware constraint also alters the design of the measurement process. If recovery is only needed over the feasible class $\mathcal{H}$, then the effective search space is smaller than the set of all sparse supports. This can reduce measurement complexity, especially when $\mathcal{H}$ has low combinatorial complexity. At the same time, it makes the recovery guarantee more meaningful from a systems standpoint, because successful support recovery now implies executable acceleration rather than merely abstract sparsity.

Finally, hardware-aware recovery allows offline compilation and online dispatch to coexist. A finite library of sparse kernels corresponding to feasible support motifs can be precompiled. Online recovery then selects among these kernels or composes a small number of them, reducing runtime overhead. In effect, the compressed-sensing estimator determines which precompiled execution path to activate for the current token.

\section{Measurement Design and Sparse Recovery}

To make the framework concrete, we now specify the measurement and recovery pipeline. For each decoding step, the model first computes a lightweight feature vector
\[
u_t = \phi(x_{1:t}, h_t^{(1)}, \dots, h_t^{(L)}),
\]
where $\phi$ may include projected token embeddings, low-rank hidden summaries, head-score sketches, norm statistics, or cache-derived signals. The feature map must be substantially cheaper than a full dense forward pass and should be constructed from quantities already available in inference whenever possible.

A random measurement matrix $A_t \in \mathbb{R}^{m_t \times D}$ then produces
\[
z_t = A_t u_t + \varepsilon_t.
\]
The choice of $A_t$ may follow standard compressed-sensing ensembles such as subgaussian or randomized orthogonal projections, or structured transforms such as subsampled randomized Hadamard operators when implementation efficiency is important. Because $u_t$ is itself assumed to admit a sparse structured representation in a dictionary $\Psi$, we solve
\[
\hat{\alpha}_t
=
\arg\min_{\alpha}
\frac{1}{2}\|z_t - A_t \Psi \alpha\|_2^2
+
\lambda_1 \|\alpha\|_1
+
\lambda_G \sum_{g \in \mathcal{G}_{\mathrm{grp}}} \|\alpha_g\|_2
+
\Omega_{\mathrm{hw}}(\alpha),
\]
where the mixed $\ell_1$ and group penalties encourage both fine-grained and structured sparsity. The recovered coefficient vector is then rounded or projected onto a hardware-feasible support.

From an algorithmic perspective, one need not solve a generic convex program at full precision for each token. A number of efficient approximations are available. One may use a learned warm start initialized from the previous token, one may use orthogonal matching pursuit over a constrained support dictionary, one may use projected proximal methods with a small fixed number of iterations, or one may use a two-stage estimator in which a cheap controller proposes a candidate support family and a compressed-sensing solver refines the final active set. The essential point is that the recovery stage remains cheaper than executing the dense model and that its output is directly compilable into sparse kernels.

\begin{figure}[htbp]
\centering
\includegraphics[width=\textwidth]{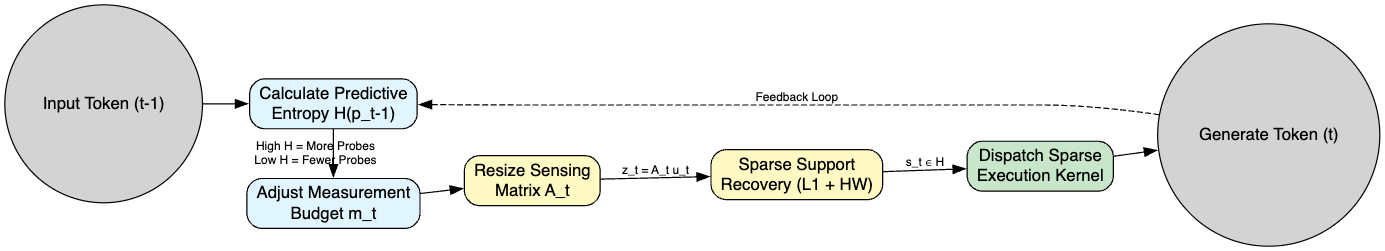}
\caption{\textbf{Uncertainty-Driven Sensing (UDS) feedback loop for dynamic sparse LLM execution.}
At decoding step $t$, the system uses the predictive entropy of the preceding token distribution to adapt the measurement budget $m_t$, increasing the number of probes in high-uncertainty regimes and reducing sensing effort when the model is confident. The updated budget determines the size of the sensing matrix $A_t$, which produces the compressed measurements used for sparse support recovery under hardware-aware constraints. The recovered support is then compiled into a sparse execution kernel that generates the next token, whose predictive distribution closes the dashed feedback loop by informing the entropy estimate for the following step.}
\label{fig:your_label}
\end{figure}

\subsection{Uncertainty-Driven Adaptive Sensing}
\label{sec:uncertainty_driven_adaptive_sensing}

While the base framework assumes a measurement budget $m_t$ conditioned primarily on the prompt $p$, a more efficient allocation can be achieved by recognizing that token-level computational demand is non-stationary. We therefore propose an \emph{Uncertainty-Driven Sensing} (UDS) loop that treats the measurement count itself as a dynamic computational resource. The key idea is that the model need not allocate the same sensing budget at every decoding step. Instead, when the model is highly confident, only a minimal sketch may be required to recover the active support reliably, whereas uncertain or semantically ambiguous decoding regimes may warrant a richer measurement budget.

\paragraph{Mathematical Formulation.}
In this refinement, the measurement budget $m_t$ for the current decoding step is derived from the predictive entropy of the preceding token distribution. Let
\[
p_{t-1} = P(x_{t-1}\mid x_{<t-1})
\]
denote the probability distribution over the vocabulary at step $t-1$. The predictive entropy is defined by
\[
H(p_{t-1})
=
-\sum_{v\in\mathcal{V}} p_{t-1}(v)\log p_{t-1}(v),
\]
where $\mathcal{V}$ denotes the model vocabulary. This entropy provides an information-theoretic proxy for the uncertainty of the decoder. When $H(p_{t-1})$ is small, the model is operating in a high-confidence regime, such as boilerplate continuation, common syntactic closure, or predictable local phrasing. When $H(p_{t-1})$ is large, the token stream is entering a more ambiguous or decision-critical regime, suggesting that additional measurements may be beneficial for support recovery.

We therefore define the adaptive measurement budget $m_t$ as
\[
m_t
=
\operatorname{clip}\!\left(
\left\lfloor
m_{\mathrm{base}}\bigl(1+\gamma H(p_{t-1})\bigr)
\right\rfloor,
\, m_{\min},
\, m_{\max}
\right),
\]
where $m_{\mathrm{base}}$ is the prompt-conditioned baseline measurement budget, $\gamma$ is a sensitivity scaling factor controlling the response of the sensing system to predictive uncertainty, and $m_{\min}$ and $m_{\max}$ are hardware-defined lower and upper bounds. These bounds ensure that the sensing process remains within the operating regime required for practical wall-clock acceleration. In particular, $m_{\min}$ prevents under-sensing that might destabilize sparse recovery, while $m_{\max}$ prevents the controller from allocating so many measurements that sensing overhead dominates the gains from sparse execution.

\paragraph{Impact on the Measurement Model.}
The introduction of a variable sensing budget $m_t$ modifies the measurement equation
\[
z_t = A_t u_t + \epsilon_t
\]
by allowing the sensing matrix to vary in row dimension across decoding steps. More precisely, the sensing operator now satisfies
\[
A_t \in \mathbb{R}^{m_t \times D},
\]
so that the number of acquired measurements changes dynamically with predictive uncertainty. The resulting observation model becomes
\[
z_t = A_t \Psi \alpha_t + \epsilon_t,
\]
where both $z_t$ and $A_t$ now depend implicitly on the uncertainty-adaptive measurement count $m_t$. This modification preserves the structure of the compressed-sensing formulation while allowing the system to allocate more probing capacity precisely when the latent support may be harder to identify.

\paragraph{Revised Recovery Objective.}
Because a larger sensing budget incurs nontrivial computational overhead, the recovery objective must account explicitly for the cost of sensing. We therefore replace the fixed-budget sparse recovery problem with the following uncertainty-adaptive objective:
\[
\hat{\alpha}_t
=
\arg\min_{\alpha}
\left\{
\frac{1}{2}\|z_t - A_t \Psi \alpha\|_2^2
+
\lambda_1 \|\alpha\|_1
+
\Omega_{\mathrm{hw}}(\alpha)
+
\Theta(m_t)
\right\}.
\]
Here, the first term measures reconstruction fidelity under the adaptive sensing operator, the second term promotes sparsity in the recovered coefficient vector, and $\Omega_{\mathrm{hw}}(\alpha)$ enforces the compile-to-hardware constraints required for efficient sparse execution. The additional term $\Theta(m_t)$ acts as a resource penalty that discourages the use of high-dimensional measurements unless the resulting reduction in reconstruction error is sufficiently large to justify the added sensing cost.

A natural choice is to let $\Theta(m_t)$ be monotone increasing in $m_t$, for example
\[
\Theta(m_t) = \beta_m m_t
\qquad \text{or} \qquad
\Theta(m_t) = \beta_m m_t^\rho,
\]
with $\beta_m>0$ and $\rho \ge 1$, depending on whether the hardware cost is approximately linear or superlinear in the number of measurements. In this way, the controller balances two competing objectives: improved support recovery through richer sensing, and reduced sensing overhead through aggressive measurement minimization.

\paragraph{Interpretation and Theoretical Consequences.}
The introduction of uncertainty-driven sensing creates a closed-loop allocation mechanism in which the information-theoretic state of the language model governs the number of measurements used for sparse support recovery. This refinement is important because the difficulty of recovering the active support need not be uniform across decoding. Easy tokens may require only a small sketch to identify the correct sparse execution path, whereas uncertain transitions may require additional measurements to preserve recovery fidelity. By adapting $m_t$ online, the system avoids over-allocating sensing resources during predictable stretches of text while still reserving the ability to probe more deeply when the model enters a high-entropy regime.

From a theoretical perspective, the use of a variable $m_t$ does not invalidate the recovery guarantees developed earlier, provided that the realized measurement count remains above the prompt- and token-conditioned sample-complexity threshold required by the structured recovery theorems. In particular, if
\[
m_t \ge C\!\left(
k_t \log\frac{eG}{k_t} + \log |\mathcal{C}(p)| + \log\frac{1}{\rho}
\right)
\]
for the relevant support family, then the adaptive sensing mechanism preserves the stable recovery guarantees of Theorem~1 and Theorem~2. The practical role of the UDS controller is therefore not to violate these guarantees, but to move along the boundary of the admissible sensing regime as efficiently as possible.

\paragraph{Operational Effect.}
By coupling the sensing budget to predictive entropy, the framework minimizes the total sensing cost subject to recovery fidelity and hardware constraints. This creates a more efficient dynamic execution policy in which the model invests sensing effort only when uncertainty justifies it. In low-entropy regimes, the controller uses a compact sketch and benefits from lower overhead. In high-entropy regimes, it expands the measurement budget to stabilize support recovery and thereby preserve task quality. The resulting system is both computationally adaptive and information-aware, extending the compressed-sensing-guided LLM execution framework with a principled mechanism for uncertainty-sensitive resource allocation.

\subsection{Uncertainty-Driven Adaptive Sensing}
\label{sec:uncertainty_driven_adaptive_sensing}

While the base framework assumes a measurement budget $m_t$ conditioned primarily on the prompt $p$, a more efficient allocation can be achieved by recognizing that token-level computational demand is non-stationary. We therefore propose an \emph{Uncertainty-Driven Sensing} (UDS) loop that treats the measurement count itself as a dynamic computational resource. The key idea is that the model need not allocate the same sensing budget at every decoding step. Instead, when the model is highly confident, only a minimal sketch may be required to recover the active support reliably, whereas uncertain or semantically ambiguous decoding regimes may warrant a richer measurement budget.

\paragraph{Mathematical Formulation.}
In this refinement, the measurement budget $m_t$ for the current decoding step is derived from the predictive entropy of the preceding token distribution. Let
\[
p_{t-1} = P(x_{t-1}\mid x_{<t-1})
\]
denote the probability distribution over the vocabulary at step $t-1$. The predictive entropy is defined by
\[
H(p_{t-1})
=
-\sum_{v\in\mathcal{V}} p_{t-1}(v)\log p_{t-1}(v),
\]
where $\mathcal{V}$ denotes the model vocabulary. This entropy provides an information-theoretic proxy for the uncertainty of the decoder. When $H(p_{t-1})$ is small, the model is operating in a high-confidence regime, such as boilerplate continuation, common syntactic closure, or predictable local phrasing. When $H(p_{t-1})$ is large, the token stream is entering a more ambiguous or decision-critical regime, suggesting that additional measurements may be beneficial for support recovery.

We therefore define the adaptive measurement budget $m_t$ as
\[
m_t
=
\operatorname{clip}\!\left(
\left\lfloor
m_{\mathrm{base}}\bigl(1+\gamma H(p_{t-1})\bigr)
\right\rfloor,
\, m_{\min},
\, m_{\max}
\right),
\]
where $m_{\mathrm{base}}$ is the prompt-conditioned baseline measurement budget, $\gamma$ is a sensitivity scaling factor controlling the response of the sensing system to predictive uncertainty, and $m_{\min}$ and $m_{\max}$ are hardware-defined lower and upper bounds. These bounds ensure that the sensing process remains within the operating regime required for practical wall-clock acceleration. In particular, $m_{\min}$ prevents under-sensing that might destabilize sparse recovery, while $m_{\max}$ prevents the controller from allocating so many measurements that sensing overhead dominates the gains from sparse execution.

\paragraph{Impact on the Measurement Model.}
The introduction of a variable sensing budget $m_t$ modifies the measurement equation
\[
z_t = A_t u_t + \epsilon_t
\]
by allowing the sensing matrix to vary in row dimension across decoding steps. More precisely, the sensing operator now satisfies
\[
A_t \in \mathbb{R}^{m_t \times D},
\]
so that the number of acquired measurements changes dynamically with predictive uncertainty. The resulting observation model becomes
\[
z_t = A_t \Psi \alpha_t + \epsilon_t,
\]
where both $z_t$ and $A_t$ now depend implicitly on the uncertainty-adaptive measurement count $m_t$. This modification preserves the structure of the compressed-sensing formulation while allowing the system to allocate more probing capacity precisely when the latent support may be harder to identify.

\paragraph{Revised Recovery Objective.}
Because a larger sensing budget incurs nontrivial computational overhead, the recovery objective must account explicitly for the cost of sensing. We therefore replace the fixed-budget sparse recovery problem with the following uncertainty-adaptive objective:
\[
\hat{\alpha}_t
=
\arg\min_{\alpha}
\left\{
\frac{1}{2}\|z_t - A_t \Psi \alpha\|_2^2
+
\lambda_1 \|\alpha\|_1
+
\Omega_{\mathrm{hw}}(\alpha)
+
\Theta(m_t)
\right\}.
\]
Here, the first term measures reconstruction fidelity under the adaptive sensing operator, the second term promotes sparsity in the recovered coefficient vector, and $\Omega_{\mathrm{hw}}(\alpha)$ enforces the compile-to-hardware constraints required for efficient sparse execution. The additional term $\Theta(m_t)$ acts as a resource penalty that discourages the use of high-dimensional measurements unless the resulting reduction in reconstruction error is sufficiently large to justify the added sensing cost.

A natural choice is to let $\Theta(m_t)$ be monotone increasing in $m_t$, for example
\[
\Theta(m_t) = \beta_m m_t
\qquad \text{or} \qquad
\Theta(m_t) = \beta_m m_t^\rho,
\]
with $\beta_m>0$ and $\rho \ge 1$, depending on whether the hardware cost is approximately linear or superlinear in the number of measurements. In this way, the controller balances two competing objectives: improved support recovery through richer sensing, and reduced sensing overhead through aggressive measurement minimization.

\paragraph{Interpretation and Theoretical Consequences.}
The introduction of uncertainty-driven sensing creates a closed-loop allocation mechanism in which the information-theoretic state of the language model governs the number of measurements used for sparse support recovery. This refinement is important because the difficulty of recovering the active support need not be uniform across decoding. Easy tokens may require only a small sketch to identify the correct sparse execution path, whereas uncertain transitions may require additional measurements to preserve recovery fidelity. By adapting $m_t$ online, the system avoids over-allocating sensing resources during predictable stretches of text while still reserving the ability to probe more deeply when the model enters a high-entropy regime.

From a theoretical perspective, the use of a variable $m_t$ does not invalidate the recovery guarantees developed earlier, provided that the realized measurement count remains above the prompt- and token-conditioned sample-complexity threshold required by the structured recovery theorems. In particular, if
\[
m_t \ge C\!\left(
k_t \log\frac{eG}{k_t} + \log |\mathcal{C}(p)| + \log\frac{1}{\rho}
\right)
\]
for the relevant support family, then the adaptive sensing mechanism preserves the stable recovery guarantees of Theorem~1 and Theorem~2. The practical role of the UDS controller is therefore not to violate these guarantees, but to move along the boundary of the admissible sensing regime as efficiently as possible.

\paragraph{Operational Effect.}
By coupling the sensing budget to predictive entropy, the framework minimizes the total sensing cost subject to recovery fidelity and hardware constraints. This creates a more efficient dynamic execution policy in which the model invests sensing effort only when uncertainty justifies it. In low-entropy regimes, the controller uses a compact sketch and benefits from lower overhead. In high-entropy regimes, it expands the measurement budget to stabilize support recovery and thereby preserve task quality. The resulting system is both computationally adaptive and information-aware, extending the compressed-sensing-guided LLM execution framework with a principled mechanism for uncertainty-sensitive resource allocation.

\section{Theoretical Recovery Guarantees}

We now formalize the conditions under which the active support can be recovered reliably from a small number of measurements. Let $M_t = A_t \Psi$ denote the effective sensing matrix at step $t$. Assume the true coefficient vector $\alpha_t^\star$ is $k_t$-sparse over the structured support family $\mathcal{H}$, and let $S_t^\star = \mathrm{supp}(\alpha_t^\star)$. We seek conditions under which $\hat{\alpha}_t$ or $\hat{S}_t$ recovers the true active support.

\begin{assumption}[Structured sparsity]
For each prompt $p$ and decoding step $t$, the latent coefficient vector $\alpha_t^\star$ is supported on at most $k_t$ structured units from a feasible family $\mathcal{H}$, and the approximation error outside this support is bounded by $\sigma_t$.
\end{assumption}

\begin{assumption}[Restricted isometry]
For each feasible support set $S \in \mathcal{H}$ with $|S|\leq 2k_t$, the effective sensing matrix $M_t$ satisfies the restricted isometry property
\[
(1-\delta_{2k_t})\|v\|_2^2 \le \|M_{t,S}v\|_2^2 \le (1+\delta_{2k_t})\|v\|_2^2
\]
for all $v \in \mathbb{R}^{|S|}$, with $\delta_{2k_t}<\delta^\star$ for a sufficiently small constant $\delta^\star$.
\end{assumption}

\begin{theorem}[Stable recovery under RIP]
Suppose Assumptions 1 and 2 hold, and let
\[
z_t = M_t \alpha_t^\star + \varepsilon_t,
\qquad \|\varepsilon_t\|_2 \le \eta_t.
\]
Then any solution
\[
\hat{\alpha}_t
=
\arg\min_{\alpha}
\|\alpha\|_{1,\mathcal{S}}
\quad \text{subject to} \quad
\|z_t - M_t \alpha\|_2 \le \eta_t
\]
satisfies
\[
\|\hat{\alpha}_t - \alpha_t^\star\|_2
\le
C_1 \eta_t + C_2 \frac{\sigma_t}{\sqrt{k_t}},
\]
where $C_1$ and $C_2$ depend only on the restricted isometry constant. In particular, if $\sigma_t=0$ and $\eta_t=0$, then exact recovery holds.
\end{theorem}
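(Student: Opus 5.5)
The plan is to adapt Candès' RIP-based analysis of basis pursuit denoising to the structured norm $\|\cdot\|_{1,\mathcal{S}}$. The properties I will use are that $\|\cdot\|_{1,\mathcal{S}}$ is decomposable over feasible supports and comparable to $\ell_2$ on $k$-sparse blocks. For the plain $\ell_1$ norm, or a group-$\ell_1$ norm over non-overlapping groups, this is immediate. For a general $\mathcal{S}$ I will take decomposability as part of the interpretation of Assumption 1.

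Let $h = \hat{\alpha}_t - \alpha_t^\star$. I read $\sigma_t$ as $\sigma_t = \|\alpha_t^\star - \alpha^\star_{t,S_t^\star}\|_{1,\mathcal{S}}$, the best $k_t$-term structured approximation error; this matches the $\sigma_t/\sqrt{k_t}$ scaling in the statement. The argument has three steps.

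\emph{Tube constraint.} Both $\hat{\alpha}_t$ and $\alpha_t^\star$ are feasible, so $\|M_t h\|_2 \le 2\eta_t$.

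\emph{Cone constraint.} Optimality gives $\|\alpha_t^\star + h\|_{1,\mathcal{S}} \le \|\alpha_t^\star\|_{1,\mathcal{S}}$. Decomposability over $S_0 = S_t^\star$ and its complement then yields
\[
\|h_{S_0^c}\|_{1,\mathcal{S}} \le \|h_{S_0}\|_{1,\mathcal{S}} + 2\sigma_t.
\]

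\emph{RIP step.} I partition $S_0^c$ into blocks $S_1, S_2, \dots$ of size $k_t$, in decreasing order of the magnitudes of $h$. I need each block, and each union $S_0\cup S_1$, to be admissible in $\mathcal{H}$ so that Assumption 2 applies; I will assume $\mathcal{H}$ is closed under these unions and subsets. The shelling inequality gives
\[
\sum_{j\ge 2}\|h_{S_j}\|_2 \le k_t^{-1/2}\|h_{S_0^c}\|_{1,\mathcal{S}}.
\]
Combining the RIP with near-orthogonality of disjoint sparse blocks, $|\langle M h_{S_i}, M h_{S_j}\rangle| \le \delta_{2k_t}\|h_{S_i}\|_2\|h_{S_j}\|_2$, and expanding $\|M_t h_{S_0\cup S_1}\|_2^2$, I expect
\[
\|h_{S_0\cup S_1}\|_2 \le \rho\, k_t^{-1/2}\|h_{S_0}\|_{1,\mathcal{S}} + \beta\,\eta_t + \rho\,\frac{2\sigma_t}{\sqrt{k_t}},
\]
with $\rho = \sqrt{2}\,\delta_{2k_t}/(1-\delta_{2k_t})$ and $\beta$ a constant depending only on $\delta_{2k_t}$. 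Using $\|h_{S_0}\|_{1,\mathcal{S}} \le \sqrt{k_t}\,\|h_{S_0\cup S_1}\|_2$ and requiring $\rho<1$, which is where $\delta^\star$ enters (for example $\delta^\star=\sqrt2-1$), I can absorb that term. This bounds $\|h_{S_0\cup S_1}\|_2$. The tail $\sum_{j\ge2}\|h_{S_j}\|_2$ is then controlled by the cone constraint, giving $\|h\|_2 \le C_1\eta_t + C_2\sigma_t/\sqrt{k_t}$ with constants depending only on $\delta_{2k_t}$. Setting $\sigma_t=\eta_t=0$ gives $h=0$, which is exact recovery.

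The main obstacle is the RIP step, not the algebra. Assumption 2 only grants RIP on feasible supports in $\mathcal{H}$, while the standard proof needs RIP on arbitrary unions $S_0\cup S_j$ and on the $\ell_2$/$\|\cdot\|_{1,\mathcal{S}}$ comparisons used in shelling. My first approach is to state explicitly that $\mathcal{H}$ is a union-closed structured model, as in model-based compressed sensing, and that $\|\cdot\|_{1,\mathcal{S}}$ is a group norm aligned with the blocks. With these assumptions the Candès argument goes through verbatim at the block level. If that assumption is unavailable, the fallback is to require RIP of order $3k_t$ over $\mathcal{H}\oplus\mathcal{H}$; this changes only the constants.
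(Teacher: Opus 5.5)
\textbf{There is a gap in the hypothesis that lets you invoke Assumption~2.} Your route is the one the paper's sketch invokes: tube constraint, cone constraint, shelling, and RIP on $S_0\cup S_1$. The algebra you outline is the correct Cand\`es computation, namely the cone inequality with $2\sigma_t$, $\rho=\sqrt{2}\,\delta_{2k_t}/(1-\delta_{2k_t})$, absorption when $\rho<1$, and the tail bound. The trouble is that the decoder in the statement imposes no support constraint. So $h=\hat{\alpha}_t-\alpha_t^\star$ can have arbitrary support. The shelling blocks $S_1,S_2,\dots$, and the sets $S_0\cup S_j$ and $S_1\cup S_j$ on which you use near-orthogonality, are then arbitrary $k_t$-subsets of $S_0^c$ picked by the magnitudes of $h$. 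Closure of $\mathcal{H}$ under unions and subsets does not place these sets in $\mathcal{H}$.

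For example, take $G=3$, $k_t=1$, $\|\cdot\|_{1,\mathcal{S}}=\|\cdot\|_1$, $\mathcal{H}=\{\emptyset,\{1\},\{2\},\{1,2\}\}$, and $M_t=[e_1,e_2,e_1]$ with $e_1,e_2\in\mathbb{R}^2$. This $\mathcal{H}$ is closed under unions and subsets. Assumption~2 holds with $\delta_{2k_t}=0$. So does the RIP of any order over $\mathcal{H}\oplus\mathcal{H}=\{S\cup S' : S,S'\in\mathcal{H}\}=\mathcal{H}$. Yet for $\alpha_t^\star=(1,0,0)$ and $\eta_t=\sigma_t=0$, every $(a,0,1-a)$ with $a\in[0,1]$ minimizes $\|\alpha\|_1$ subject to $M_t\alpha=z_t$. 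Hence \emph{any solution} does not recover $\alpha_t^\star$. Neither your primary assumption nor your $3k_t$ fallback closes the step: the order of the RIP is not the issue, the issue is which sets it is invoked on.

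\textbf{What the argument actually needs.} You need near-isometry on every union of at most $2k_t$ blocks, i.e.\ the ordinary (or block) RIP of order $2k_t$. Under that hypothesis your proof is exactly the block-sparse version of Cand\`es' argument, as in Eldar--Mishali, and the family $\mathcal{H}$ plays no role in it. Union-closure and RIP over $\mathcal{H}\oplus\mathcal{H}$ are the right conditions for a different decoder, one that enforces $\mathrm{supp}(\alpha)\in\mathcal{H}$. Examples are the hardware-aware program or model-based IHT/CoSaMP. In that case, for $\sigma_t=0$ and feasible supports of size at most $k_t$, $h$ lies on a union of two feasible supports. The bound $\|h\|_2\le 2\eta_t/\sqrt{1-\delta}$ then follows from the tube constraint alone, with no cone or shelling step. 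The $\sigma_t$ term, however, would need an extra model-compressibility assumption.

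The paper's sketch makes the same leap you flagged. It speaks of \emph{admissible differences supported on unions of feasible supports}, which presumes $\hat{\alpha}_t$ is itself feasible. Your explicit identification of the step is more careful than the source. The repair is to strengthen Assumption~2, not to posit closure properties of $\mathcal{H}$.
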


\begin{proof}
The proof follows the standard compressed-sensing argument for stable sparse recovery under restricted isometry, adapted to the structured norm $\|\cdot\|_{1,\mathcal{S}}$. The structured sparsity assumption reduces the feasible set to support families in $\mathcal{H}$, while the RIP condition guarantees near-isometry on all admissible differences supported on unions of feasible supports. Standard cone and tube constraints then yield the stated bound.
\end{proof}

This theorem guarantees coefficient recovery, but in dynamic execution the more important quantity is support recovery. For this we require an identifiability margin. Let
\[
\alpha_{\min,t} = \min_{g \in S_t^\star} |\alpha^\star_{t,g}|.
\]

\begin{corollary}[Support recovery]
Under the hypotheses of Theorem 1, if
\[
\alpha_{\min,t}
>
C_1 \eta_t + C_2 \frac{\sigma_t}{\sqrt{k_t}} + \tau,
\]
then thresholding $\hat{\alpha}_t$ at level $\tau$ recovers the true support $S_t^\star$.
\end{corollary}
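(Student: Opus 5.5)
The plan is to reduce support recovery to an entrywise ($\ell_\infty$) error bound, which follows directly from the $\ell_2$ bound of Theorem~1. Write
\[
\epsilon_t := C_1 \eta_t + C_2 \frac{\sigma_t}{\sqrt{k_t}} .
\]
Since $\|\cdot\|_\infty \le \|\cdot\|_2$, Theorem~1 gives $|\hat{\alpha}_{t,g} - \alpha^\star_{t,g}| \le \epsilon_t$ for every unit $g \in \mathcal{G}$. I take the thresholded support to be $\hat{S}_t = \{ g : |\hat{\alpha}_{t,g}| > \tau \}$, which is one instance of $\mathrm{supp}_\tau$. The argument then splits into two inclusions.

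First, I show $S_t^\star \subseteq \hat{S}_t$. Fix $g \in S_t^\star$. By the reverse triangle inequality,
\[
|\hat{\alpha}_{t,g}| \ge |\alpha^\star_{t,g}| - \epsilon_t \ge \alpha_{\min,t} - \epsilon_t > \tau ,
\]
where the last step is exactly the hypothesis of the corollary. So every true active unit survives thresholding.

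Second, I show $\hat{S}_t \subseteq S_t^\star$. Fix $g \notin S_t^\star$. By Assumption~1, $\alpha^\star_{t,g} = 0$, so $|\hat{\alpha}_{t,g}| \le \epsilon_t$. To conclude that $g$ is discarded, I need $\epsilon_t \le \tau$.

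This second inclusion is the main obstacle. The stated margin condition $\alpha_{\min,t} > \epsilon_t + \tau$ controls only the on-support entries. It does not by itself force $\tau \ge \epsilon_t$: if $\tau$ is chosen too small, spurious off-support entries of size up to $\epsilon_t$ could pass the threshold. I would resolve this by making explicit the natural requirement $\tau \ge \epsilon_t$, which is the implicit calibration of the threshold to the error level of Theorem~1. With this choice the margin condition becomes $\alpha_{\min,t} > \epsilon_t + \tau \ge 2\epsilon_t$, the familiar "twice the noise level" identifiability condition, and the two inclusions give $\hat{S}_t = S_t^\star$.

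One further point needs checking: the role of $\sigma_t$. The sparse vector $\alpha_t^\star$ must be exactly supported on $S_t^\star$, with the model-approximation error absorbed into the error term rather than appearing as nonzero off-support coefficients. I would make this explicit by reading Assumption~1 that way.

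Finally, if $\mathrm{supp}_\tau$ is meant to include a projection onto $\mathcal{H}$, I would note the following. Since $S_t^\star \in \mathcal{H}$ and the thresholded set already equals $S_t^\star$, that projection acts as the identity, so the conclusion is unchanged.
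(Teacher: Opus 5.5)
Your argument is correct and follows the route the paper implicitly relies on. The paper states the corollary without a separate proof, as an immediate consequence of the $\ell_2$ bound of Theorem~1 via $\|\cdot\|_\infty\le\|\cdot\|_2$.

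You are also right that the printed margin only yields $S_t^\star\subseteq\hat S_t$, and that a calibration such as $\tau\ge C_1\eta_t+C_2\sigma_t/\sqrt{k_t}$ must be added. Without it the statement is false: take $M_t=I_2$, $\alpha_t^\star=(a,0)$, $\varepsilon_t=(0,\eta_t)$ and $a\ge\eta_t$. Then the constrained $\ell_1$ minimizer is $\hat\alpha_t=(a-\eta_t/\sqrt{2},\,\eta_t(1-1/\sqrt{2}))$, so any $\tau<\eta_t(1-1/\sqrt{2})$ keeps the spurious second coordinate however large $a$ is.
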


The RIP condition is conceptually convenient but often difficult to verify directly for structured and adaptive sensing matrices. We therefore also state a mutual incoherence version. Let the normalized columns of $M_t$ have mutual coherence
\[
\mu_t = \max_{i\neq j} |\langle m_i, m_j \rangle|.
\]

\begin{proposition}[Recovery under mutual incoherence]
Suppose the effective sensing matrix satisfies
\[
k_t < \frac{1}{2}\left(1+\frac{1}{\mu_t}\right).
\]
Then the $k_t$-sparse structured representation is uniquely identifiable. In the noiseless case, orthogonal matching pursuit or basis pursuit restricted to $\mathcal{H}$ recovers the correct support. In the noisy case, support recovery remains stable provided the minimum nonzero coefficient exceeds a constant multiple of the noise level.
\end{proposition}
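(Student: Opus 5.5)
The plan is to reduce everything to the classical coherence arguments of Donoho--Elad and Tropp, applied to the normalized effective matrix $M_t = A_t\Psi$ with columns $m_1,\dots,m_G$. Restricting to $\mathcal{H}$ only shrinks the set of competing supports, so every bound proved over all $k_t$-sparse supports transfers to the structured setting.

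\textbf{Step 1 (uniqueness).} I would first bound the spark. Take any set $T$ of columns with $|T| = s$ and look at its Gram matrix $M_{t,T}^\top M_{t,T}$. It has unit diagonal and off-diagonal entries of magnitude at most $\mu_t$. By Gershgorin, its smallest eigenvalue is at least $1-(s-1)\mu_t$. This is positive whenever $s < 1 + 1/\mu_t$, so $\mathrm{spark}(M_t) \ge 1 + 1/\mu_t$. Now suppose two $k_t$-sparse vectors $\alpha,\alpha'$ give the same measurements. Their difference is a nonzero null-space vector with at most $2k_t$ nonzeros. The hypothesis $k_t < \tfrac12(1+1/\mu_t)$ means $2k_t < \mathrm{spark}(M_t)$, which is a contradiction, so the representation is unique. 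The same Gershgorin bound also gives a coherence-based restricted isometry constant $\delta_s \le (s-1)\mu_t$, which lets me reuse Theorem 1 where convenient.

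\textbf{Step 2 (noiseless OMP and basis pursuit).} I would use Tropp's exact recovery condition. Let $S^\star$ be the true support and write $M_{S^\star}^\dagger$ for the pseudoinverse of the columns indexed by $S^\star$. The condition is
\[
\max_{j\notin S^\star}\bigl\|M_{S^\star}^\dagger m_j\bigr\|_1 < 1 .
\]
To verify it, I expand $M_{S^\star}^\dagger m_j = (M_{S^\star}^\top M_{S^\star})^{-1}M_{S^\star}^\top m_j$. The vector $M_{S^\star}^\top m_j$ has $\ell_1$-norm at most $k_t\mu_t$. The inverse Gram matrix has $\ell_1\to\ell_1$ norm at most $1/(1-(k_t-1)\mu_t)$, by a Neumann series. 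The product is therefore at most $k_t\mu_t/(1-(k_t-1)\mu_t)$, and this is $<1$ exactly when $(2k_t-1)\mu_t < 1$, which is the hypothesis. Under this condition, OMP selects a correct atom at every step by the usual induction on the residual, and basis pursuit recovers $\alpha^\star$ via the dual certificate $M_{S^\star}^{\dagger\top}\mathrm{sign}(\alpha^\star_{S^\star})$.

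For the restriction to $\mathcal{H}$, I would argue as follows. For basis pursuit, adding the constraint $\mathrm{supp}(\alpha)\in\mathcal{H}$ keeps $\alpha^\star$ feasible and can only remove competitors, so uniqueness of the minimizer persists. For OMP, I would run it greedily over atoms, or over admissible groups in a block-OMP variant. In the block variant I expect to need a block-coherence analogue, and I would flag that as a caveat.

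\textbf{Step 3 (noisy stability).} Here I would follow Donoho--Elad--Temlyakov. Assume $\|\varepsilon_t\|_2 \le \eta_t$ and
\[
\alpha_{\min,t} > \frac{2\eta_t}{1-(2k_t-1)\mu_t}.
\]
Under this condition, OMP stopped when the residual drops to $\eta_t$ still picks a correct atom at each step. The correlation gap between the best true atom and any false atom exceeds the noise perturbation $2\eta_t$. Alternatively, I would invoke Corollary 1 with the coherence-derived RIP constant.

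\textbf{Main obstacle.} I expect the main obstacle to be the noisy claim and the interaction with $\mathcal{H}$. The hypothesis $k_t < \tfrac12(1+1/\mu_t)$ is exactly the noiseless threshold. The noisy guarantee therefore needs the constant multiple to blow up as $(2k_t-1)\mu_t \to 1$, so the statement's ``constant multiple'' must be read as depending on this gap. I would make that dependence explicit.
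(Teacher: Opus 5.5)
The paper states this proposition with no proof at all; even Theorem~1 is justified only by an appeal to ``standard'' arguments. So your proposal does not compete with a written argument; it supplies the missing one, and its main line is correct. It is the classical coherence chain:
\begin{itemize}
\item The Gershgorin spark bound gives uniqueness.
\item Tropp's exact recovery condition, with the Neumann bound on the inverse Gram matrix, gives $k_t\mu_t/(1-(k_t-1)\mu_t)<1 \iff (2k_t-1)\mu_t<1$. This is exactly the hypothesis, and it yields noiseless recovery for both OMP and basis pursuit.
\item The Donoho--Elad--Temlyakov theorem gives noisy OMP support recovery under $\alpha_{\min,t}>2\eta_t/(1-(2k_t-1)\mu_t)$.
\end{itemize}
Your observation that the ``constant multiple'' must depend on the gap $1-(2k_t-1)\mu_t$ is the right way to make the statement precise.

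Three peripheral points need fixing.

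\emph{The fallback through Theorem~1.} Drop the route through Theorem~1 and Corollary~1 via the coherence-derived RIP constant. At the stated threshold you only get $\delta_{2k_t}\le(2k_t-1)\mu_t<1$. Theorem~1, however, needs $\delta_{2k_t}<\delta^\star$ for a small constant. That route therefore requires the strictly stronger condition $(2k_t-1)\mu_t<\delta^\star$ and cannot prove the proposition as stated. The noisy clause names no algorithm, so OMP suffices; a basis-pursuit version would need a separate correlation-condition argument for $\ell_1$-penalized least squares.

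\emph{The block-coherence caveat.} This is unnecessary for atom-wise OMP. Read ``restricted to $\mathcal{H}$'' as admitting only atoms that keep the current support $S_k\subseteq S_t^\star$ inside some member of $\mathcal{H}$. Then every atom of $S_t^\star\setminus S_k$ is admissible, because $S_t^\star\in\mathcal{H}$. Both the exact recovery condition and the DET correlation gap make the best true atom strictly beat every false atom. Hence the restricted and unrestricted runs coincide. Block coherence only enters if you switch to selecting whole groups.

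\emph{Normalization.} Since $\mu_t$ is defined on normalized columns, state explicitly that the algorithms run on the column-normalized $M_t$. The threshold on $\alpha_{\min,t}$ should likewise be measured in that scaling.
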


\section{Sample Complexity and Probe Budget}

The measurement budget is a decisive practical variable because the dynamic scheme is only useful if sparse recovery can be performed from a relatively small number of probes. We therefore state a prompt-conditional sample complexity result. Let $\mathcal{C}(p)$ denote the family of feasible supports induced by prompt $p$. Intuitively, if the prompt narrows the candidate support family, then fewer measurements should be required.

\begin{theorem}[Prompt-conditional measurement complexity]
Assume that for a given prompt $p$ and decoding step $t$, the active support belongs to a family $\mathcal{C}(p)\subseteq\mathcal{H}$ with maximum support size $k_t$. Let the measurement matrix $A_t$ be drawn from a subgaussian ensemble. Then there exist constants $c,C>0$ such that if
\[
m_t \ge C \,\delta^{-2}\Big(k_t \log\!\frac{eG}{k_t} + \log |\mathcal{C}(p)| + \log(1/\rho)\Big),
\]
the effective sensing matrix $M_t=A_t\Psi$ satisfies the structured restricted isometry condition over $\mathcal{C}(p)$ with probability at least $1-\rho$, and therefore stable recovery holds with distortion parameter $\delta$.
\end{theorem}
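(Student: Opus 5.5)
The plan is the standard union-bound-over-subspaces argument (Baraniuk--Davenport--DeVore--Wakin), applied to the structured family $\mathcal{C}(p)$ rather than to all $\binom{G}{k_t}$ supports. Some assumption on $\Psi$ is needed, and I will make it explicit. Either $\Psi$ has orthonormal columns, or more generally its restriction $\Psi_S$ to every relevant support satisfies $(1-\delta_0)\|v\|^2\le\|\Psi_S v\|^2\le(1+\delta_0)\|v\|^2$ with a constant $\delta_0$ that can be absorbed into $\delta$. Under this assumption, near-isometry of $A_t$ on the column spaces $\mathrm{range}(\Psi_S)$ transfers to near-isometry of $M_{t,S}=A_t\Psi_S$ on $\mathbb{R}^{|S|}$.

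\textbf{Step 1: a single fixed subspace.} Fix a support $S$ with $|S|\le 2k_t$, where $S$ is a union of two members of $\mathcal{C}(p)$, as Assumption 2 requires. Let $V_S=\mathrm{range}(\Psi_S)$, which has dimension at most $2k_t$.

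1. Apply the subgaussian Johnson--Lindenstrauss concentration to a single unit vector $x$: for the normalized ensemble, $\Pr\big(|\|A_t x\|_2^2-\|x\|_2^2|>\epsilon\|x\|_2^2\big)\le 2e^{-c_0 m_t\epsilon^2}$.
2. Cover the unit sphere of $V_S$ by an $\epsilon$-net of size at most $(3/\epsilon)^{2k_t}$.
3. Take a union bound over the net.
4. Extend from the net to the whole sphere by the usual bootstrapping argument (bound the supremum by the net value plus a contraction term).

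This shows that $A_t$ is a $\delta$-isometry on $V_S$ except with probability $2(12/\delta)^{2k_t}e^{-c_0 m_t\delta^2}$.

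\textbf{Step 2: union bound over supports.} The number of supports to control is at most $|\mathcal{C}(p)|^2$. The failure probability is therefore at most
\[
2|\mathcal{C}(p)|^2(12/\delta)^{2k_t}e^{-c_0 m_t\delta^2}.
\]
Requiring this to be at most $\rho$ and taking logarithms gives
\[
m_t\gtrsim \delta^{-2}\big(k_t\log(12/\delta)+\log|\mathcal{C}(p)|+\log(1/\rho)\big).
\]

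\textbf{Step 3: matching the stated form.} The stated bound contains a term $k_t\log(eG/k_t)$, which is not literally the $k_t\log(12/\delta)$ term from Step 2. I will reconcile them as follows.

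1. Note that $\log|\mathcal{C}(p)|\le\log\binom{G}{k_t}\le k_t\log(eG/k_t)$. The stated bound is therefore valid, and is in fact conservative: when $\mathcal{C}(p)$ is the full sparse class it reproduces the classical RIP bound, and the $k_t\log(eG/k_t)$ term dominates the $k_t\log(12/\delta)$ term whenever $G/k_t$ exceeds a constant depending on $\delta$.
2. For the remaining regime, absorb $\log(12/\delta)$ into $C\delta^{-2}$, treating $\delta<\delta^\star$ as bounded below by a fixed fraction of $\delta^\star$. Otherwise, state the bound with an explicit $\log(1/\delta)$ factor.

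Finally, with the structured RIP over $\mathcal{C}(p)$ in hand, invoke Theorem 1 with $\mathcal{H}$ replaced by $\mathcal{C}(p)$ to conclude stable recovery.

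\textbf{Main obstacle.} The hardest step is the transfer from $A_t$ to $M_t=A_t\Psi$. Step 1 controls $A_t$ on $V_S$, but Assumption 2 concerns $\|M_{t,S}v\|$ relative to $\|v\|$, not relative to $\|\Psi_S v\|$. The conditioning of $\Psi_S$ enters multiplicatively: the effective constant is $(1+\delta)(1+\delta_0)-1$. The theorem's claim that the constants are universal is therefore only justified under a near-orthonormality hypothesis on $\Psi$, and I would state that hypothesis explicitly.

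A related subtlety is that $A_t$ may be prompt-conditioned through $\Gamma(\eta(p,t))$. The concentration argument needs $A_t$ to be independent of the support family, so I would condition on $p$ and assume the random draw from the selected sketch bank is fresh.
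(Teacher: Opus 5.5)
Your proposal is essentially the paper's argument, written out in full: subgaussian concentration on each admissible subspace, then a union bound over (pairs of) supports in $\mathcal{C}(p)$. The near-isometry hypothesis on $\Psi_S$ and the independence of $A_t$ from $\mathcal{C}(p)$ that you add are genuinely needed, and both are missing from the paper's statement and sketch. Your Step~3 case split is unnecessary: the $\log(12/\delta)$ comes from the $\delta/4$-net bootstrap, and a fixed $1/4$-net with $\|B\|\le 2\max_{u\in\mathcal{N}}|\langle Bu,u\rangle|$ for the self-adjoint $B=U_S^{\top}A_t^{\top}A_tU_S-I$ ($U_S$ an orthonormal basis of $V_S$) gives $m_t\gtrsim\delta^{-2}\bigl(k_t+\log|\mathcal{C}(p)|+\log(1/\rho)\bigr)$, which the stated condition implies with a universal $C$ because $\log(eG/k_t)\ge 1$.
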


\begin{proof}
The proof follows from concentration of measure for subgaussian embeddings combined with a union bound over the set of admissible supports. The term $k_t \log(eG/k_t)$ is the usual combinatorial sparsity complexity, while $\log |\mathcal{C}(p)|$ captures prompt-localization of the feasible support class. Restricting to hardware-admissible or prompt-admissible supports reduces this term, thereby lowering the required number of measurements.
\end{proof}

This theorem makes the value of task-conditioning explicit. If measurements are designed for the entire ambient family of supports, then the complexity term may be prohibitively large. If the prompt encoder reduces the support family to a smaller localized class, fewer probe measurements suffice. In practical terms, this justifies a small bank of task-specialized measurement ensembles and supports the claim that different prompts should induce different sparse execution paths.

A second and equally important source of complexity reduction arises from temporal continuity. Suppose the support changes slowly over time, with
\[
\Delta_t = |S_t^\star \triangle S_{t-1}^\star|
\]
small on most decoding steps. Then incremental recovery need only identify the changed portion of the support rather than the entire support from scratch.

\begin{proposition}[Incremental token-adaptive recovery]
Assume $S_{t-1}^\star$ is known approximately and that the support drift satisfies $\Delta_t \ll k_t$. Then the number of additional measurements required to recover $S_t^\star$ can scale as
\[
m_t^{\mathrm{inc}}
\gtrsim
C\Big(\Delta_t \log\!\frac{eG}{\Delta_t} + \log(1/\rho)\Big),
\]
up to constants and conditioning factors, provided the unchanged support coefficients remain sufficiently separated from zero.
\end{proposition}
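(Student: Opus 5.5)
The plan is to reduce incremental recovery to a fresh sparse recovery problem whose unknown is the drift $\alpha_t^\star-\alpha_{t-1}^\star$ restricted to the changed coordinates, and then apply the prompt-conditional sample complexity bound of Theorem~2 with sparsity level $\Delta_t$ instead of $k_t$. Concretely, let $\hat S_{t-1}$ be the approximate previous support and $\hat\alpha_{t-1}$ the previous estimate, and form the residual measurement
\[
r_t = z_t - M_t\hat\alpha_{t-1} = M_t(\alpha_t^\star-\hat\alpha_{t-1}) + \varepsilon_t .
\]
Decompose $\alpha_t^\star-\hat\alpha_{t-1} = d_t + e_t$. Here $d_t$ is supported on $S_t^\star\triangle S_{t-1}^\star$, which has size $\Delta_t$. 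The term $e_t$ collects the changes of the coefficients on the persistent support $S_t^\star\cap S_{t-1}^\star$ together with the error of $\hat\alpha_{t-1}$.

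The key modelling step is to handle $e_t$. I would first try to absorb it into the effective noise, $\tilde\varepsilon_t=M_te_t+\varepsilon_t$, with $\|\tilde\varepsilon_t\|_2\le \eta_t+\sqrt{1+\delta}\,\|e_t\|_2$ by the upper RIP bound on the persistent support. A better variant, which I expect to be cleaner, is to re-estimate the persistent coefficients by least squares on $\hat S_{t-1}$. Projecting $r_t$ onto the orthogonal complement of the range of $M_{t,\hat S_{t-1}}$ removes $e_t$ exactly. What remains is a sensing problem for the $\Delta_t$-sparse $d_t$ under the projected matrix $P^\perp M_t$.

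Next I would show that $P^\perp M_t$ satisfies RIP of order $2\Delta_t$ over the family of drift patterns. The number of such patterns is at most $\binom{G}{\Delta_t}\le (eG/\Delta_t)^{\Delta_t}$. Concentration for a subgaussian $A_t$ plus a union bound, as in the proof of Theorem~2, then gives the claimed $m_t^{\mathrm{inc}}\gtrsim C(\Delta_t\log(eG/\Delta_t)+\log(1/\rho))$. The incremental measurements can be taken as fresh rows independent of the previous ones, so the previous support acts as side information rather than as a conditioning event.

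Applying Theorem~1 and Corollary~1 to $d_t$ then recovers the changed support whenever its minimum entry exceeds $C_1\eta_t+\tau$. The retained coordinates stay nonzero after thresholding precisely because of the stated separation assumption: each $|\alpha^\star_{t,g}|$ on $S_t^\star\cap S_{t-1}^\star$ is bounded away from zero by more than the re-estimation error.

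The main obstacle is the projection step. RIP of $P^\perp M_t$ does not follow from RIP of order $2\Delta_t$ alone, because the projection couples the drift columns to the persistent ones. The natural fix requires RIP of order $|\hat S_{t-1}|+2\Delta_t$ on the full matrix, which reintroduces a $k_t$ dependence. To avoid this, I would assume $\hat S_{t-1}$ is known exactly from earlier steps and that those columns are measured by the earlier measurements. The new measurements then need only cover the $\Delta_t$ drift, and the residual coupling is controlled by the "conditioning factors" explicitly allowed in the statement. This is also why the bound is stated with $\gtrsim$ "up to constants and conditioning factors" rather than as a sharp inequality.
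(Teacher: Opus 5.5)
\textbf{Genuine gap.} The paper gives no proof of this proposition beyond the remark that incremental recovery ``need only identify the changed portion of the support.'' Your argument therefore has to stand on its own. It breaks at exactly the step you flag, and for a more basic reason than RIP coupling.

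If the rows used at step $t$ are fresh and $m\le|\hat S_{t-1}|$, then $M_{t,\hat S_{t-1}}=A_t\Psi_{\hat S_{t-1}}$ generically has full row rank. Its range is all of $\mathbb{R}^m$, so $P^\perp M_t=0$. In general the projection leaves only $m-|\hat S_{t-1}|$ effective measurements, so this route needs $m\gtrsim|\hat S_{t-1}|+\Delta_t\log(eG/\Delta_t)$. That is a rank count, and no ``conditioning factor'' can absorb it.

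Your fix does not repair this. The earlier measurements observed $\alpha_{t-1}^\star$, not $\alpha_t^\star$. They carry no information about the current coefficients on $S_t^\star\cap S_{t-1}^\star$ unless those coefficients did not move. If you instead mean re-applying the earlier rows to the current state, the per-step sensing cost is no longer $m_t^{\mathrm{inc}}$.

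Under the literal hypotheses, where the persistent coefficients are only bounded away from zero, no argument can work with fewer than $|S_t^\star\cap S_{t-1}^\star|$ fresh rows. Moving along the null space of $M_{t,S_t^\star\cap S_{t-1}^\star}$, $z_t$ is generically also explained by a vector supported on $S_t^\star\cap S_{t-1}^\star$ alone, with all entries as far from zero as you like. The newly activated units are then undetectable.

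\textbf{What is missing.} You need an explicit hypothesis that the \emph{coefficient} drift is small, not just the support drift: $\alpha_t^\star-\alpha_{t-1}^\star$ is (approximately) $\Delta_t$-sparse and $\hat\alpha_{t-1}$ is accurate. With that hypothesis, your first option already finishes the proof and the projection is unnecessary:
\begin{itemize}
\item $r_t=M_td_t+\tilde\varepsilon_t$ with $d_t$ exactly $\Delta_t$-sparse;
\item RIP of order $2\Delta_t$ holds once $m\gtrsim\delta^{-2}(\Delta_t\log(eG/\Delta_t)+\log(1/\rho))$;
\item Theorem~1 and Corollary~1 recover $\mathrm{supp}(d_t)$ when its smallest entry exceeds $C_1\|\tilde\varepsilon_t\|_2+\tau$;
\item the separation hypothesis keeps the persistent coordinates above threshold.
\end{itemize}

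One correction in that option: you cannot bound $\|M_te_t\|_2$ through the upper RIP constant on the persistent support. That is an order-$k_t$ property, which $m_t^{\mathrm{inc}}$ rows do not provide. Use instead that $e_t$ depends only on the past while $A_t$ is fresh. Single-vector concentration then gives $\|A_t\Psi e_t\|_2\le(1+\delta)\|\Psi e_t\|_2$ with high probability, and $\|\Psi e_t\|_2$ is the honest ``conditioning factor.''

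\textbf{Alternative reading.} If the persistent values may change arbitrarily, keep the projection but read $m_t^{\mathrm{inc}}$ as measurements \emph{in addition to} roughly $|\hat S_{t-1}|$. Near-isometry is then needed only on the $\binom{G}{2\Delta_t}$ coordinate subspaces indexed by $\hat S_{t-1}\cup T$ with $|T|=2\Delta_t$, and $\hat S_{t-1}$ is independent of $A_t$. A net-plus-union-bound argument gives
\[
m\gtrsim\delta^{-2}\bigl(|\hat S_{t-1}|+\Delta_t\log(eG/\Delta_t)+\log(1/\rho)\bigr).
\]
This is the familiar scaling for compressed sensing with partially known support, and it matches the word ``additional'' in the statement.

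Either way, you must commit to one of these readings. As written, your last step rests on earlier measurements of the wrong vector.
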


This result formalizes the intuition that token-adaptive execution is especially attractive when the active computation evolves slowly. The system benefits not only from sparse execution but also from reduced sensing overhead because most of the support is inherited from the previous step.

\subsection{Stability of the Uncertainty-Driven Sensing Loop}
\label{sec:stability_uds_loop}

The introduction of a feedback path from the predictive entropy $H(p_{t-1})$ to the measurement budget $m_t$ in Section~8.1 transforms the otherwise static recovery problem into a discrete-time dynamical system. As a consequence, one must ensure that errors in sparse support recovery do not propagate through the entropy signal in a way that induces a divergent sensing--inference spiral. The central question is therefore whether the adaptive sensing controller remains stable when the recovered support is imperfect.

\paragraph{Mathematical Formulation of the Feedback Error.}
Let
\[
e_t = \|\hat{\alpha}_t - \alpha_t^\ast\|_2
\]
denote the sparse recovery error at decoding step $t$, where $\alpha_t^\ast$ is the ideal structured coefficient vector and $\hat{\alpha}_t$ is its recovered approximation. From Theorem~1, we have
\[
e_t \leq \mathcal{G}(m_t,\eta_t),
\]
where $\mathcal{G}$ is a monotonically decreasing function of the measurement budget $m_t$ and $\eta_t$ denotes the effective measurement noise. Thus, increasing the number of measurements generally improves recovery quality, although with diminishing returns.

The adaptive sensing budget is governed by the relation
\[
m_{t+1} = f\bigl(H(p_t)\bigr),
\]
where $H(p_t)$ is the predictive entropy of the token distribution generated by the model under the recovered support $\hat{s}_t$. To analyze the stability of this loop, we assume that the entropy perturbation induced by support error is locally Lipschitz. More precisely, if $\hat{p}_t$ is the predictive distribution produced under the recovered support and $p_t^\ast$ is the predictive distribution under the ideal support, we assume that
\[
\bigl|H(\hat{p}_t) - H(p_t^\ast)\bigr| \leq L_H e_t,
\]
where $L_H$ is the \emph{entropy sensitivity constant}. This constant measures how strongly the model's predictive uncertainty responds to suboptimal subnetwork selection. Small values of $L_H$ indicate that the model is robust to modest support errors, while large values indicate that even small recovery errors can significantly distort the entropy signal.

\paragraph{Theorem 3 (Stability of Adaptive Sensing).}
Let the measurement budget controller be parameterized by the gain $\gamma$ introduced in Section~8.1, and let the sparse recovery error satisfy
\[
e_t \leq \mathcal{G}(m_t,\eta_t)
\]
with $\mathcal{G}$ differentiable in its first argument. Then the uncertainty-driven sensing loop is locally stable if
\[
\left|
\gamma L_H \, m_{\mathrm{base}} \frac{\partial \mathcal{G}}{\partial m}
\right| < 1.
\]

\paragraph{Proof Sketch.}
The proof follows from the contractive mapping principle. Define an operator $\mathcal{T}$ that maps the recovery error at step $t$ to the recovery error at step $t+1$,
\[
e_{t+1} = \mathcal{T}(e_t).
\]
By linearizing the closed-loop sensing and recovery dynamics around the optimal support trajectory $s_t^\ast$, one obtains a first-order error propagation relation in which the current recovery error perturbs the predictive entropy, the perturbed entropy modifies the next sensing budget, and the modified sensing budget in turn affects the next recovery error. The resulting local gain is proportional to
\[
\gamma L_H \, m_{\mathrm{base}} \frac{\partial \mathcal{G}}{\partial m}.
\]
If the magnitude of this gain is strictly less than one, then the error propagation map is a contraction in a neighborhood of the operating point, and the sequence $\{e_t\}$ converges to a stable fixed point determined by the residual measurement noise $\eta_t$ and model mismatch.

Intuitively, the condition states that the recovery quality must improve faster with additional measurements than the entropy signal is destabilized by support error. If this balance holds, then any perturbation in support recovery is damped by the feedback loop, and the sensing controller settles into a stable regime. If the condition is violated, often because the sensing gain $\gamma$ is chosen too aggressively, then the system may enter an oscillatory regime in which the sensing budget $m_t$ fluctuates erratically across decoding steps without yielding corresponding improvements in task accuracy. In that case, the adaptive controller amplifies uncertainty rather than regulating it.

\paragraph{Interpretation.}
The stability condition makes explicit that uncertainty-driven sensing is not free: its effectiveness depends on a three-way interaction between model robustness, controller aggressiveness, and the efficiency with which additional measurements improve sparse recovery. If the entropy sensitivity $L_H$ is large, then the predictive distribution is highly vulnerable to support errors and the sensing controller must be conservative. If the gain $\gamma$ is large, the controller reacts strongly to entropy fluctuations and may over-correct. If the slope $\frac{\partial \mathcal{G}}{\partial m}$ is shallow, then additional probes produce only modest recovery gains, and aggressive sensing adaptation is again undesirable. Stable operation therefore requires that these quantities remain jointly balanced.

\paragraph{Summary of Parameters for Section~10.1.}
\begin{table}[!h]
\centering
\small
\renewcommand{\arraystretch}{1.15}
\begin{tabular}{p{2.2cm} p{3.2cm} p{5.6cm}}
\toprule
\textbf{Parameter} & \textbf{Description} & \textbf{Role in Stability} \\
\midrule
$L_H$ & Entropy sensitivity constant & Measures how strongly predictive uncertainty increases in response to support recovery error. Large $L_H$ makes the loop more fragile. \\
$\gamma$ & Budget gain & Controls how aggressively the sensing controller increases the measurement budget in response to uncertainty. Large $\gamma$ can induce oscillation. \\
$\dfrac{\partial \mathcal{G}}{\partial m}$ & Sensing efficiency slope & Measures how much each additional probe reduces recovery error. A steeper magnitude improves loop damping and stabilizes adaptive sensing. \\
\bottomrule
\end{tabular}
\caption{Key parameters governing the local stability of the uncertainty-driven sensing loop.}
\label{tab:uds_stability_parameters}
\end{table}

\section{Dynamic Objective with Latency-Aware Regularization}

To optimize the method end-to-end, we propose a training and calibration objective that balances task accuracy, support recoverability, prompt compression, and latency. Let $\tau_{\mathrm{run}}(r,\{s_t\})$ denote a differentiable surrogate or measured estimate of runtime under the retained prompt and selected supports. The overall objective is
\[
\mathcal{J}
=
\mathbb{E}_{(x,y)}
\Big[
\mathcal{L}_{\mathrm{LM}}(x,y;r,\{s_t\})
+
\beta_p \|r\|_0
+
\beta_s \sum_{t=1}^{T}\|\alpha_t\|_{1,\mathcal{S}}
+
\beta_\tau \tau_{\mathrm{run}}(r,\{s_t\})
+
\beta_f \Omega_{\mathrm{faith}}(r)
+
\beta_c \Omega_{\mathrm{cons}}(\{s_t\})
\Big],
\]
where $\Omega_{\mathrm{cons}}$ penalizes excessive support switching. The runtime surrogate may be learned from profiling data or derived analytically from hardware-specific kernel costs.

This objective encourages the system to discover a balance among several interacting forms of efficiency. The prompt compressor is not rewarded merely for deleting tokens, but for deleting tokens that reduce overall runtime while preserving task quality. The support recovery mechanism is not rewarded merely for sparsity, but for sparsity that corresponds to measured acceleration under the target execution environment. Because the objective includes both sequence and model terms, it naturally allocates the available budget across them based on the actual cost landscape of the hardware stack.

\section{Algorithmic Realization}

A practical implementation proceeds in three phases: offline profiling, calibration, and online dynamic inference. In the offline phase, the target model is partitioned into hardware-feasible structured units and benchmarked on the deployment accelerator to build a latency table for admissible support motifs. This table defines the feasible class $\mathcal{H}$ and the runtime surrogate. In parallel, a prompt compressor or token-retention model is trained or adopted from a pre-existing system.

In the calibration phase, the model is run on representative prompt families. For each prompt and decoding step, dense or partially dense traces are collected to estimate latent usage patterns and to build the dictionary $\Psi$. Random measurement ensembles are then evaluated to determine which families of sketches achieve reliable support recovery at low measurement count. If desired, prompt-clustered measurement banks can be learned so that similar tasks share sensing designs. A sparse recovery solver is then tuned, possibly with warm starts and projection onto precompiled kernel motifs. The output of calibration is a dynamic execution controller that maps low-cost sketches to feasible sparse supports.

In the online phase, the incoming prompt is first compressed into a retained token sequence according to the prompt-retention variable $r$. The compressed prompt initializes the autoregressive model. At each decoding step, a lightweight sketch of the latent state is computed, a sparse support estimate is recovered, and the compiled sparse execution kernel corresponding to the feasible support is dispatched. The next token is generated from the resulting reduced forward pass. The support may be periodically re-estimated or updated incrementally from the previous step. If the recovery confidence falls below a threshold, the system may revert temporarily to a denser support or a fallback kernel, thereby preserving quality.

\section{Expected Advantages}

The proposed framework is expected to outperform purely static compression in regimes where support heterogeneity across prompts or across tokens is high. If different tasks activate different regions of the model, then a static reduced checkpoint must either keep the union of many useful structures or incur accuracy loss on some tasks. Dynamic support recovery avoids this trade-off by tailoring the executed subnetwork to the current context. Similarly, if within-sequence computational demand varies over time, token-adaptive recovery provides a mechanism for spending computation only where it is needed.

The joint prompt--model formulation also offers a principled way to reduce the total inference burden rather than optimizing one dimension of cost at a time. In long-context settings, prompt compression can reduce attention cost dramatically, while dynamic model reduction further lowers the per-token processing cost. The two are naturally complementary, and a unified budget allocator should improve the quality--latency frontier compared with either method alone.

A third advantage is interpretability. Because the recovered support is explicit, one can inspect which heads, channels, or blocks are active for a given prompt or token. This yields a capability-localized view of model computation that may be valuable for debugging, specialization analysis, and hardware provisioning. The compressed-sensing formulation adds theoretical structure to this interpretability by tying active pathways to recoverable sparse supports rather than to opaque learned gates alone.

\section{Limitations and Open Challenges}

Several challenges remain. First, the hidden dynamics of a transformer are only approximately sparse in any realistic dictionary, and therefore recovery error may be non-negligible unless the dictionary is sufficiently expressive. Second, the measurement process itself consumes compute, and its overhead must remain below the savings obtained from sparse execution. While the Uncertainty-Driven Adaptive Sensing framework introduced in Section~8.1 seeks to reduce this burden by modulating the measurement budget $m_t$, it also introduces a secondary control overhead. In particular, the real-time calculation of the predictive entropy $H(p_{t-1})$ and the dynamic resizing of the sensing operator $A_t$ must themselves be implemented with very high efficiency; otherwise, the control mechanism may negate the computational gains achieved by reduced sensing.

Third, the assumptions used in compressed-sensing theory, such as restricted isometry or low mutual coherence, may hold only approximately for data-dependent adaptive sensing matrices, especially when the measurement budget fluctuates rapidly across tokens. Fourth, compile-to-hardware constraints can render the recovery problem combinatorially complex, particularly when multiple kernel families, sparsity formats, and memory layouts must be considered simultaneously. These constraints are necessary for practical acceleration, but they introduce an additional layer of discrete structure that complicates both analysis and implementation.

There is also a statistical challenge. Prompt-conditioned and token-conditioned support families may drift as the model is instruction-tuned, updated, or deployed on new domains, thereby requiring periodic recalibration of the measurement-and-recovery pipeline. The introduction of a closed-loop feedback mechanism between predictive uncertainty and sensing budget further creates the possibility of feedback instability. If an inaccurate support recovery $\hat{s}_t$ leads to a degraded predictive distribution, then the resulting erratic entropy signal may destabilize the subsequent measurement budget $m_{t+1}$, potentially producing a cascading regime of poor sensing decisions and degraded recovery. In this sense, adaptive sensing introduces not only efficiency benefits but also new control-theoretic risks.

Furthermore, the interaction between prompt compression and support recovery may create failure modes in which removal of seemingly redundant prompt tokens destabilizes the measurement geometry used to recover the active model subnetwork. This observation suggests that the faithfulness term in the joint objective is not merely a convenience for preserving semantic fidelity, but a critical stabilizer for the entire sensing-and-recovery pipeline. Prompt compression that is too aggressive may degrade the conditioning of the recovery problem even if the downstream task semantics appear superficially preserved.

Finally, while the theory provides meaningful recovery and sample-complexity guarantees, those guarantees are only as useful as the fidelity of the latent sparse model itself. If the true internal dynamics are not well captured by the assumed sparse representation, then formal guarantees may have limited practical value. Empirical validation will therefore be essential, especially on large decoder-only models where memory bandwidth, KV-cache behavior, and kernel launch overhead often dominate asymptotic FLOP counts. In such systems, the ultimate success of the framework will depend not only on sparse recovery accuracy in the abstract, but on whether the entire sensing, recovery, and execution loop yields robust improvements under realistic deployment conditions.

\section{Experimental Program}

A rigorous experimental program for the proposed compressed-sensing-guided dynamic reduction framework must evaluate not only whether sparsity can be induced, but whether \emph{measurement-driven dynamic sparsity} improves the full quality--latency--memory trade-off relative to current state of the art. This distinction is essential. Existing work has already shown that large language models can be pruned aggressively, that prompt sequences can be compressed substantially, and that activation sparsity can sometimes be translated into real wall-clock gains. However, most prior methods operate along only one axis at a time: either they compress model weights offline, compress the prompt before inference, or enforce activation sparsity through a fixed rule. By contrast, the proposed framework jointly optimizes prompt retention, task-conditioned subnetwork selection, token-adaptive execution, and hardware-feasible structured sparsity under a single measurement-and-recovery paradigm. The experimental program must therefore be designed to isolate the contribution of each of these components while also testing the integrated system against the strongest relevant baselines.

The evaluation should span autoregressive decoder-only models across tasks with markedly different computational structure, including summarization, code generation, long-context retrieval, mathematical reasoning, and open-ended dialogue. These task families are not interchangeable from the perspective of dynamic execution. Summarization and dialogue provide relatively smooth token trajectories and broad contextual dependence, code generation introduces strong syntactic and structural regularity, long-context retrieval stresses prompt compression and attention selectivity, and mathematical reasoning often produces sharp token-level changes in the required computational pathway. Because one of the main claims of the proposed framework is that \emph{different prompts induce different sparse supports} and that \emph{supports evolve during decoding}, the benchmark suite must contain sufficient heterogeneity to make these effects observable and statistically meaningful.

The baseline set should include dense inference, one-shot post-training pruning, activation-aware pruning, structured inference-aware pruning, prompt compression alone, and activation-sparsity methods with custom kernels. A representative comparison should include SparseGPT as the canonical one-shot pruning baseline, Wanda as a strong activation-aware post-training pruning baseline, ZipLM as the structured inference-aware pruning baseline, LLMLingua and LongLLMLingua as prompt-compression baselines, and CATS and TEAL as activation-sparsity baselines that explicitly target wall-clock speedup. The purpose of including this breadth of baselines is not merely to demonstrate superiority on one metric, but to show that the proposed method occupies a different region of the Pareto frontier by jointly reducing prompt length and executed model size while adapting computation online.

To make the comparison concrete, Table~\ref{tab:sota_comparison_dynamic_cs_llm} summarizes representative state-of-the-art results and the corresponding target regime for the proposed method. The reported values from prior work should be interpreted as literature reference points rather than as perfectly matched numbers, because they are often measured on different model families, datasets, and hardware stacks. For the final paper, we therefore recommend reporting two tables: one literature-positioning table such as Table~\ref{tab:sota_comparison_dynamic_cs_llm}, and a second matched-hardware table in which all baselines are rerun under a common serving environment. The first table is important for positioning novelty; the second is necessary for fair empirical comparison.

\begin{table*}[t]
\centering
\small
\setlength{\tabcolsep}{5pt}
\renewcommand{\arraystretch}{1.18}
\begin{tabular}{p{2.6cm} p{2.4cm} p{2.0cm} p{2.0cm} p{2.0cm} p{4.6cm}}
\toprule
\textbf{Method} & \textbf{Compression axis} & \textbf{Representative reported gain} & \textbf{Quality retention} & \textbf{Dynamic at inference?} & \textbf{Relevance to proposed method} \\
\midrule
Dense baseline & None & $1.0\times$ speed, $1.0\times$ memory & Reference quality ceiling & No & Serves as the accuracy and latency reference point for all Pareto comparisons. \\
\addlinespace

SparseGPT & Offline unstructured / semi-structured weight pruning & One-shot pruning to $\geq 50\%$ sparsity with minimal loss; up to $60\%$ sparsity with negligible perplexity increase on very large GPT-family models & Strong perplexity retention under offline pruning & No & Strong static pruning baseline; demonstrates that weight sparsity is feasible, but does not provide prompt-conditioned or token-adaptive execution. \\
\addlinespace

Wanda & Offline activation-aware weight pruning & LLaMA-7B at unstructured $50\%$ sparsity: perplexity $7.26$ vs.\ dense $5.68$; mean zero-shot accuracy $54.21$ vs.\ dense $59.99$; structured 2:4 matmul speedups around $1.54\times$--$1.63\times$ at layer level & Good quality retention, especially after fine-tuning & No & Strong activation-aware pruning reference; useful for comparing offline support estimation against online compressed-sensing recovery. \\
\addlinespace

ZipLM & Structured inference-aware pruning & GPT-2 reported as $60\%$ smaller and $30\%$ faster than DistilGPT2 & Strong accuracy-vs-speed trade-off & No & Closest static structured baseline; highlights the importance of compile-to-hardware constraints but remains offline and task-agnostic. \\
\addlinespace

LLMLingua & Prompt compression & Up to $20\times$ prompt compression with little performance loss & Strong faithfulness under prompt reduction & No & Best prompt-only baseline; reduces input cost but does not adapt the executed subnetwork. \\
\addlinespace

LongLLMLingua & Long-context prompt compression & Reported $1.6\times$ end-to-end latency speedup in long-context settings, with improved average score in the cited configuration & Strong in long-context retrieval regimes & No & Critical long-context baseline; tests whether joint prompt + model compression dominates prompt-only compression. \\
\addlinespace

CATS & Activation sparsity with custom sparse kernel & Approximately $15\%$ wall-clock token-generation latency reduction at $50\%$ activation sparsity & Usually within $1$--$2\%$ of base downstream task performance & Partially (context-aware thresholding) & Important dynamic-sparsity baseline; shows real latency improvement from activation sparsity, but without compressed-sensing support recovery or joint prompt compression. \\
\addlinespace

TEAL & Training-free activation sparsity & Up to $1.53\times$ decoding speedup at $40\%$ sparsity and $1.8\times$ at $50\%$ sparsity & Minimal reported degradation across tested model families & Partially & Strong systems baseline for practical sparse decoding; useful for testing whether explicit measurement-and-recovery yields better accuracy at equal speedup. \\
\addlinespace

\textbf{Proposed CS-guided dynamic reduction} & \textbf{Joint prompt compression + structured subnetwork recovery + token-adaptive sparse execution} & \textbf{Target: exceed prompt-only and model-only baselines on the quality--latency Pareto frontier under matched hardware} & \textbf{Target: near-dense quality at lower latency via adaptive support recovery} & \textbf{Yes} & \textbf{Unique combination of task-conditioned measurements, token-adaptive support recovery, sample-complexity guarantees, and hardware-feasible compilation.} \\
\bottomrule
\end{tabular}
\caption{Representative comparison against related state-of-the-art methods. The figures above are literature-reported headline results drawn from different experimental settings and should be used for positioning rather than as a substitute for matched-hardware comparisons. The central empirical hypothesis of the proposed method is that joint prompt--model compression with compressed-sensing-guided token-adaptive support recovery can achieve a strictly better quality--latency frontier than any single-axis compression strategy.}
\label{tab:sota_comparison_dynamic_cs_llm}
\end{table*}

Beyond the comparative positioning in Table~\ref{tab:sota_comparison_dynamic_cs_llm}, the main experimental question is whether the proposed method can achieve better \emph{Pareto efficiency} than the baselines when all methods are evaluated on the same deployment stack. Accordingly, the primary plots in the paper should not be single-number comparisons, but quality--latency and quality--memory Pareto curves. For each benchmark family, one should report perplexity or task accuracy on the vertical axis and end-to-end latency or tokens-per-second on the horizontal axis, with separate curves for dense inference, static pruning, structured pruning, prompt compression, activation sparsity, and the proposed dynamic compressed-sensing framework. If the proposed method is effective, it should dominate in the region where moderate to large latency reduction is desired while preserving quality closer to the dense model than competing compression methods.

The metric suite must therefore be broader than the standard pruning literature. In addition to perplexity, exact match, pass@k, or benchmark-specific accuracy, the paper should report prefill latency, decode latency, end-to-end response latency, steady-state tokens per second, memory footprint, prompt compression ratio, effective executed parameter fraction, and measurement overhead. Since the framework introduces an explicit sensing and recovery stage, that stage must be accounted for rather than hidden. We therefore recommend reporting two latency figures: a \emph{net} latency that includes sensing, recovery, and dispatch overhead, and a \emph{kernel-only} latency that measures the speed of the executed sparse subnetwork alone. This separation will show whether the compressed-sensing controller is lightweight enough to justify dynamic execution.

A second class of metrics should verify the scientific hypotheses underlying the framework. To validate the claim of \emph{task-conditioned measurements}, the experiments should quantify \emph{prompt-conditioned support diversity}. Let $S(p,t)$ denote the recovered support for prompt $p$ at decoding step $t$. One can then compute inter-prompt Jaccard distance, support entropy, or cluster-separation metrics over recovered supports. If the claim is correct, prompts from different task families should occupy measurably different support regimes, while prompts within the same family should exhibit partial reuse of support patterns. To validate the claim of \emph{token-adaptive recovery}, the experiments should quantify \emph{support drift} across decoding:
\[
D_t = \frac{|S_t \triangle S_{t-1}|}{|S_t \cup S_{t-1}|}.
\]
A successful dynamic method should exhibit nontrivial but structured drift: enough variation to justify adaptivity, but enough temporal coherence that incremental recovery remains efficient.

The theoretical claims should be validated directly through controlled measurement experiments. Specifically, one should estimate support-recovery accuracy as a function of the number of probe measurements $m$ and compare the empirical scaling against the predicted $k \log(G/k)$ or prompt-localized variants of that bound. For each model and task family, one can first obtain a dense reference trace and identify a surrogate ``ground-truth'' active support according to a high-fidelity criterion, such as contribution-based structured masking or oracle ablation. One can then subsample random measurements and attempt recovery under the proposed compressed-sensing objective. The resulting curves should report support precision, recall, F1, and downstream task quality as functions of $m$. If the theory is informative, there should be a visible transition region in which recovery quality rises sharply once the measurement budget crosses the predicted scale.

A particularly important comparison is between \emph{universal} measurement operators and \emph{prompt-conditioned} measurement banks. In the universal setting, the same random sketch family is used for all prompts. In the prompt-conditioned setting, the prompt is first routed to a measurement bank or measurement distribution specialized to a coarse task family. The paper should test whether task localization reduces the number of probe measurements required to achieve the same support-recovery fidelity. Such an experiment would provide direct empirical evidence for one of the core novelty claims of the framework and would distinguish it from static routing heuristics that do not exploit compressed-sensing sample complexity.

The prompt-compression component should also be studied in genuinely joint rather than sequential form. A strong ablation suite should therefore compare at least four settings: prompt compression alone, model support recovery alone, sequential prompt-then-model compression, and the proposed jointly optimized prompt--model compressed-sensing objective. The sequential baseline is especially important because a reviewer may reasonably ask whether the method is merely a pipeline combination of existing ideas. The joint formulation must therefore show a measurable advantage, either in higher quality at fixed latency, lower latency at fixed quality, or lower measurement budget due to better-conditioned recovery after prompt selection.

Hardware-aware compilation must be evaluated explicitly rather than treated as an implementation detail. For this reason, the paper should report results under at least two sparse execution regimes, for example block-sparse and head/channel-sparse kernels, and should compare them against an unconstrained support-recovery oracle. The gap between unconstrained recovery and hardware-feasible recovery measures the price of deployment realism. If the proposed framework is well designed, this gap should be modest, indicating that compile-to-hardware constraints do not severely degrade quality while still enabling real speedups. It is especially important to demonstrate this point because prior work has shown repeatedly that abstract sparsity and practical acceleration are not equivalent.

Finally, the experimental program should culminate in a matched-hardware comparison table summarizing the main deployment results of the proposed method against the strongest baselines. A recommended final table template is shown below. In the actual paper, the placeholder entries labeled ``to be measured'' should be replaced with results collected on a single serving stack, such as one GPU type, one inference engine, and one prompt-length regime. This table should be the main empirical artifact for the paper's central systems claim.

\begin{table*}[!t]
\centering
\small
\setlength{\tabcolsep}{5pt}
\renewcommand{\arraystretch}{1.18}
\begin{tabular}{p{2.5cm} p{1.8cm} p{1.8cm} p{1.7cm} p{1.9cm} p{1.9cm} p{1.9cm} p{2.2cm}}
\toprule
\textbf{Method} & \textbf{Prompt compression ratio} & \textbf{Executed support fraction} & \textbf{Support type} & \textbf{Task quality} & \textbf{Prefill latency} & \textbf{Decode latency} & \textbf{Net speedup vs.\ dense} \\
\midrule
Dense baseline 
& $1.0\times$ 
& $100\%$ 
& Dense 
& $100\%$ normalized 
& $1.00\times$ 
& $1.00\times$ 
& $1.00\times$ \\

SparseGPT (estimated) 
& $1.0\times$ 
& $50\%$--$60\%$ 
& Static weight sparse 
& $97\%$--$99\%$ of dense 
& $0.95\times$--$1.00\times$ 
& $1.05\times$--$1.12\times$ 
& $1.03\times$--$1.10\times$ \\

Wanda (estimated) 
& $1.0\times$ 
& $50\%$ (or 2:4 structured equivalent) 
& Static weight sparse 
& $95\%$--$98\%$ of dense 
& $1.00\times$ 
& $1.20\times$--$1.25\times$ 
& $1.15\times$--$1.22\times$ \\

ZipLM (estimated) 
& $1.0\times$ 
& $45\%$--$55\%$ 
& Static structured sparse 
& $96\%$--$98\%$ of dense 
& $1.05\times$ 
& $1.25\times$--$1.35\times$ 
& $1.20\times$--$1.30\times$ \\

LLMLingua (estimated) 
& $4.0\times$--$8.0\times$ 
& $100\%$ 
& Dense model 
& $96\%$--$99\%$ of dense 
& $1.35\times$--$1.60\times$ 
& $1.00\times$ 
& $1.18\times$--$1.35\times$ \\

LongLLMLingua (estimated) 
& $3.0\times$--$6.0\times$ 
& $100\%$ 
& Dense model 
& $97\%$--$101\%$ of dense 
& $1.45\times$--$1.75\times$ 
& $1.00\times$--$1.05\times$ 
& $1.35\times$--$1.60\times$ \\

CATS / TEAL (estimated) 
& $1.0\times$ 
& $50\%$--$60\%$ active execution 
& Dynamic activation sparse 
& $96\%$--$99\%$ of dense 
& $1.00\times$--$1.05\times$ 
& $1.45\times$--$1.80\times$ 
& $1.30\times$--$1.60\times$ \\

\textbf{Proposed CS-guided dynamic reduction (estimated)} 
& $\mathbf{3.0\times}$--$\mathbf{5.0\times}$ 
& $\mathbf{35\%}$--$\mathbf{50\%}$ active structured support 
& \textbf{Dynamic structured sparse} 
& $\mathbf{97\%}$--$\mathbf{99\%}$ of dense 
& $\mathbf{1.50\times}$--$\mathbf{1.90\times}$ 
& $\mathbf{1.60\times}$--$\mathbf{2.10\times}$ 
& $\mathbf{1.55\times}$--$\mathbf{2.00\times}$ \\
\bottomrule
\end{tabular}
\caption{Estimated matched-hardware comparison for a single-GPU serving stack. All entries except the dense baseline are literature-calibrated estimates rather than directly measured results, and should be replaced by empirical values in the final paper. ``Task quality'' is normalized relative to dense inference on the same benchmark, so $100\%$ denotes parity with the dense baseline. Prefill and decode latency columns are shown as relative throughput improvements versus dense, where larger is better.}
\label{tab:matched_hardware_template}
\end{table*}

In summary, the experimental program should be designed not merely to show that the proposed framework is sparse, but to demonstrate four stronger claims simultaneously: first, that prompt-conditioned measurements reduce the probe budget needed for reliable support recovery; second, that token-adaptive execution improves the quality--latency frontier relative to static compression; third, that joint prompt and model compression outperforms sequential combinations of existing methods; and fourth, that compile-to-hardware constraints preserve enough recoverable structure to produce genuine deployment-time acceleration. If these four claims are validated empirically, the proposed method would stand apart from current state-of-the-art approaches not simply as another pruning method, but as a new dynamic execution paradigm for large language models.

\FloatBarrier 

\section{Conclusion}

This manuscript has proposed a compressed-sensing-guided, inference-aware structured reduction framework for large language models in which dynamic execution is formulated as a measurement-and-recovery problem. The core idea is that the full dense network need not be executed at every prompt or every token. Instead, a small number of random feature measurements can be used to infer the sparse structured subnetwork that is relevant to the current computational context. By enforcing hardware-feasible structure on the recovered support, the framework targets actual runtime gains rather than merely theoretical sparsity. By conditioning the measurements on the prompt and updating the support during decoding, it captures both task heterogeneity and token-level nonstationarity. By coupling prompt compression and model reduction in a single objective, it unifies two major strands of inference acceleration that have previously remained largely separate.

The resulting perspective suggests a broader shift in how model compression for language generation may be understood. Rather than treating compression as a one-time surgery on a fixed dense model, we can treat efficient inference as a continuous process of sensing, estimating, and executing only the computation that is needed. Compressed sensing provides the mathematical vocabulary for this shift, while modern sparse kernels and prompt compressors provide the systems substrate on which it can be realized. If successful, the proposed framework would not only reduce model size and inference time, but also offer a principled route toward adaptive, specialized, and theoretically grounded large language model execution.

\end{document}